\def\eqref#1{equation~\ref{#1}}
\def\1{\bm{1}}
\def\vb{{\bm{b}}}
\def\vg{{\bm{g}}}
\def\vp{{\bm{p}}}
\def\vq{{\bm{q}}}
\def\vu{{\bm{u}}}
\def\vv{{\bm{v}}}
\def\vx{{\bm{x}}}
\def\vz{{\bm{z}}}
\def\mA{{\bm{A}}}
\def\mB{{\bm{B}}}
\def\mI{{\bm{I}}}
\def\mM{{\bm{M}}}
\def\mW{{\bm{W}}}
\DeclareMathAlphabet{\mathsfit}{\encodingdefault}{\sfdefault}{m}{sl}
\SetMathAlphabet{\mathsfit}{bold}{\encodingdefault}{\sfdefault}{bx}{n}
\newcommand{\E}{\mathbb{E}}
\newcommand{\vectorize}{\mathrm{vec}}
\DeclareMathOperator*{\argmin}{arg\,min}
\def\comma{{ \text{ ,} }}
\def\period{{ \text{ .} }}
\newcommand{\norm}[1]{\left\lVert#1\right\rVert}
\def\etaCL{{\eta^{\text{C}}_{t,n}}}
\def\etaOL{{\eta^{\text{O}}_{t,n}}}
\def\etaCCG{{\eta^{\text{C-CG}}_{t,n}}}
\def\etaOCG{{\eta^{\text{O-CG}}_{t,n}}}
\def\Hut{{H_\theta^t}}
\DeclareMathOperator{\diag}{diag}
\def\ktn{{\mathbf{k}_{t,n}}}
\def\Ktn{{\mathbf{K}_{t,n}}}
\def\Hutn{{{H}_{\theta}^{t,n}}}
\def\Puut{{P_{\vu\vu}^t}}
\def\Pvvt{{P_{\vv\vv}^t}}
\def\PuuInvt{{P_{\vu\vu}^{t\text{ }\text{ }\Inv}}}
\def\PvvInvt{{P_{\vv\vv}^{t\text{ }\text{ }\Inv}}}
\def\PuuCInvt{{\widetilde{P}_{\vu\vu}^{t\text{ }\text{ }\Inv}}}
\def\PvvCInvt{{\widetilde{P}_{\vv\vv}^{t\text{ }\text{ }\Inv}}}
\def\Put{{P_{\vu}^t}}
\def\Puxt{{P_{\vu\vx}^t}}
\def\Pvt{{P_{\vv}^t}}
\def\Pvxt{{P_{\vv\vx}^t}}
\def\Puvt{{P_{\vu\vv}^t}}
\def\Pvut{{P_{\vv\vu}^t}}
\def\PuuCt{{\widetilde{P}_{\vu\vu}^t}}
\def\PvvCt{{\widetilde{P}_{\vv\vv}^t}}
\def\Pxut{{P_{\vx\vu}^t}}
\def\Pxvt{{P_{\vx\vv}^t}}
\def\kt{{\mathbf{k}_t}}
\def\Kt{{\mathbf{K}_t}}
\def\It{{\mathbf{I}_t}}
\def\Lt{{\mathbf{L}_t}}
\def\kut{{\widetilde{\mathbf{k}}_t}}
\def\Kut{{\widetilde{\mathbf{K}}_t}}
\def\KutT{{\widetilde{\mathbf{K}}_t^\transpose}}
\def\Ivt{{\widetilde{\mathbf{I}}_t}}
\def\Lvt{{\widetilde{\mathbf{L}}_t}}
\def\LvtT{{\widetilde{\mathbf{L}}_t^\transpose}}
\def\Axx{{\E[\vz_{1}\vz_{1}^\transpose]}}
\def\Axy{{\E[\vz_{1}\vz_{2}^\transpose]}}
\def\Ayy{{\E[\vz_{2}\vz_{2}^\transpose]}}
\def\Bxx{{\E[\vg_{1}\vg_{1}^\transpose]}}
\def\Bxy{{\E[\vg_{1}\vg_{2}^\transpose]}}
\def\Byy{{\E[\vg_{2}\vg_{2}^\transpose]}}
\def\Auu{{A_{\vu\vu}}}
\def\Auv{{A_{\vu\vv}}}
\def\Avu{{A_{\vv\vu}}}
\def\Avv{{A_{\vv\vv}}}
\def\Buu{{B_{\vu\vu}}}
\def\Buv{{B_{\vu\vv}}}
\def\Bvu{{B_{\vv\vu}}}
\def\Bvv{{B_{\vv\vv}}}
\def\AuvT{{A^\transpose_{\vu\vv}}}
\def\BuvT{{B^\transpose_{\vu\vv}}}
\def\AvuT{{A^\transpose_{\vv\vu}}}
\def\BvuT{{B^\transpose_{\vv\vu}}}
\def\AuuInv{{A^{\Inv}_{\vu\vu}}}
\def\AvvInv{{A^{\Inv}_{\vv\vv}}}
\def\BuuInv{{B^{\Inv}_{\vu\vu}}}
\def\BvvInv{{B^{\Inv}_{\vv\vv}}}
\def\AuuC{{\widetilde{A}_{\vu\vu}}}
\def\AvvC{{\widetilde{A}_{\vv\vv}}}
\def\BuuC{{\widetilde{B}_{\vu\vu}}}
\def\BvvC{{\widetilde{B}_{\vv\vv}}}
\def\AuuCInv{{\widetilde{A}^{\Inv}_{\vu\vu}}}
\def\BuuCInv{{\widetilde{B}^{\Inv}_{\vu\vu}}}
\def\AvvCInv{{\widetilde{A}^{\Inv}_{\vv\vv}}}
\def\BvvCInv{{\widetilde{B}^{\Inv}_{\vv\vv}}}
\def\AvvInvT{{A^{-\transpose}_{\vv\vv}}}
\def\AuuCInvT{{\widetilde{A}^{-\transpose}_{\vu\vu}}}
\def\dvx{{\delta\vx}}
\def\dvv{{\delta\vv}}
\def\Inv{{\dagger}}
\def\dth{{\delta\theta}}
\def\Qxt{{Q^{t,n}_{\vx}}}
\def\Qut{{Q^{t,n}_{\theta}}}
\def\Qxxt{{Q^{t,n}_{\vx \vx}}}
\def\Quxt{{Q^{t,n}_{\theta \vx}}}
\def\Quut{{Q^{t,n}_{\theta \theta}}}
\def\Quutn{{ Q^{t,n}_{\theta \theta} }}
\def\Qutn{{ Q^{t,n}_{\theta} }}
\def\Quxtn{{ Q^{t,n}_{\theta\vx} }}
\def\FutT{{{{F}^t_{\theta}}^\transpose}}
\def\FxtT{{{{F}^t_{\vx}}^\transpose}}
\def\Fxt{{{F}^t_{\vx}}}
\def\Fut{{{F}^t_{\theta}}}
\def\lutn{{{\ell}^{t,n}_{\theta}}}
\def\luutn{{{\ell}^{t,n}_{\theta\theta}}}
\def\Vxtn{{V^{t+1,n}_{\vx}}}
\def\Vxxtn{{V^{t+1,n}_{\vx\vx}}}
\def\Gu{{\mathbf{G}^t_\vu}}
\def\transpose{{\mathsf{T}}}
\newcommand{\eg}{{\ignorespaces\emph{e.g.}}{ }}
\newcommand{\ie}{{\ignorespaces\emph{i.e.}}{ }}
\newcommand{\wlg}{{\ignorespaces\emph{w.l.o.g.}}{ }}
\newcommand{\resp}{{\ignorespaces\emph{resp.}}{ }}
\newtheorem{theorem}{Theorem}
\newtheorem{lemma}[theorem]{Lemma}
\newtheorem{proposition}[theorem]{Proposition}
\newtheorem{definition}[theorem]{Definition}
\newcommand{\markred}[1]{{\color{red} #1}}
\newcommand\numberthis{\addtocounter{equation}{1}\tag{\theequation}}
\colorlet{color1}{green!50!black}
\colorlet{color2}{orange!95!black}
\colorlet{color3}{red!80!black}
\colorlet{color4}{red!65!black}
\colorlet{color5}{blue!75!green}
\newcommand{\markgreen}[1]{{\ignorespaces\color{color1} #1}}
\newcommand{\markblue}[1]{{\ignorespaces\color{color5} #1}}
\let\oldsqrt\sqrt
\def\sqrt{\mathpalette\DHLhksqrt}
\def\DHLhksqrt#1#2{\setbox0=\hbox{$#1\oldsqrt{#2\,}$}\dimen0=\ht0
\advance\dimen0-0.2\ht0
\setbox2=\hbox{\vrule height\ht0 depth -\dimen0}%
{\box0\lower0.4pt\box2}}
\newcommand{\specialcell}[2][c]{%
  \begin{tabular}[#1]{@{}c@{}}#2\end{tabular}}
\newcommand{\specialcelll}[2][l]{%
  \begin{tabular}[#1]{@{}l@{}}#2\end{tabular}}
\icmltitlerunning{DGNOpt: Dynamic Game Theoretic Neural Optimizer}
\begin{document}

\twocolumn[
\icmltitle{Dynamic Game Theoretic Neural Optimizer}

\begin{icmlauthorlist}
\icmlauthor{Guan-Horng Liu}{ml,ae}
\icmlauthor{Tianrong Chen}{ece}
\icmlauthor{Evangelos A. Theodorou}{ml,ae}
\end{icmlauthorlist}

\icmlaffiliation{ml}{Center for Machine Learning}
\icmlaffiliation{ae}{School of Aerospace Engineering}
\icmlaffiliation{ece}{School of Electrical and Computer Engineering, Georgia Institute of Technology, USA}

\icmlcorrespondingauthor{Guan-Horng Liu}{ghliu@gatech.edu}

\icmlkeywords{DNN training, Dynamic game, Game-theoretic optimizer}

\vskip 0.3in
]

\printAffiliationsAndNotice{}  %

\begin{abstract}
The connection between training deep neural networks (DNNs) and optimal control theory (OCT) has attracted considerable attention as a principled tool of algorithmic design.
Despite few attempts being made,
they have been limited to architectures where the layer propagation resembles a {Markovian dynamical system}.
This casts doubts on their flexibility
to modern networks that heavily rely on {non-Markovian} dependencies between layers
(\eg skip connections in residual networks).
In this work, we propose a novel dynamic game perspective
by viewing each \emph{layer} as a \emph{player} in a dynamic game characterized by the DNN itself.
Through this lens,
different classes of optimizers can be seen as matching different types of {Nash equilibria}, %
depending on the implicit information structure {of} each (p)layer.
The resulting method, called Dynamic Game Theoretic Neural Optimizer (DGNOpt),
not only generalizes OCT-inspired optimizers to richer network class;
it
also {motivates} a new training principle by solving a multi-player cooperative game.
DGNOpt shows convergence improvements over existing methods on
image classification datasets with residual and inception networks.
Our work marries strengths from both OCT and game theory, paving ways to new algorithmic opportunities from robust optimal control and {bandit-based optimization}.

\end{abstract}

\section{Introduction}

Attempts from different disciplines to provide a fundamental understanding of deep learning have advanced rapidly in recent years.
Among those, interpretation of DNNs as discrete-time nonlinear dynamical systems
has received tremendous focus.
By viewing each layer as a distinct time step,
it
{motivates} principled analysis from numerical equations \cite{weinan2017proposal,lu2017beyond}
{to physics} \cite{greydanus2019hamiltonian}.
For instance,
casting residual networks \cite{he2016deep} as a discretization of ordinary differential equations
{enables} fundamental reasoning on the loss landscape \cite{lu2020mean}
and inspires new architectures with
numerical stability or continuous limit \cite{chang2018reversible,chen2018neural}.

\begin{figure}[t]
\vskip 0.1in
\begin{center}
\includegraphics[height=3.77cm]{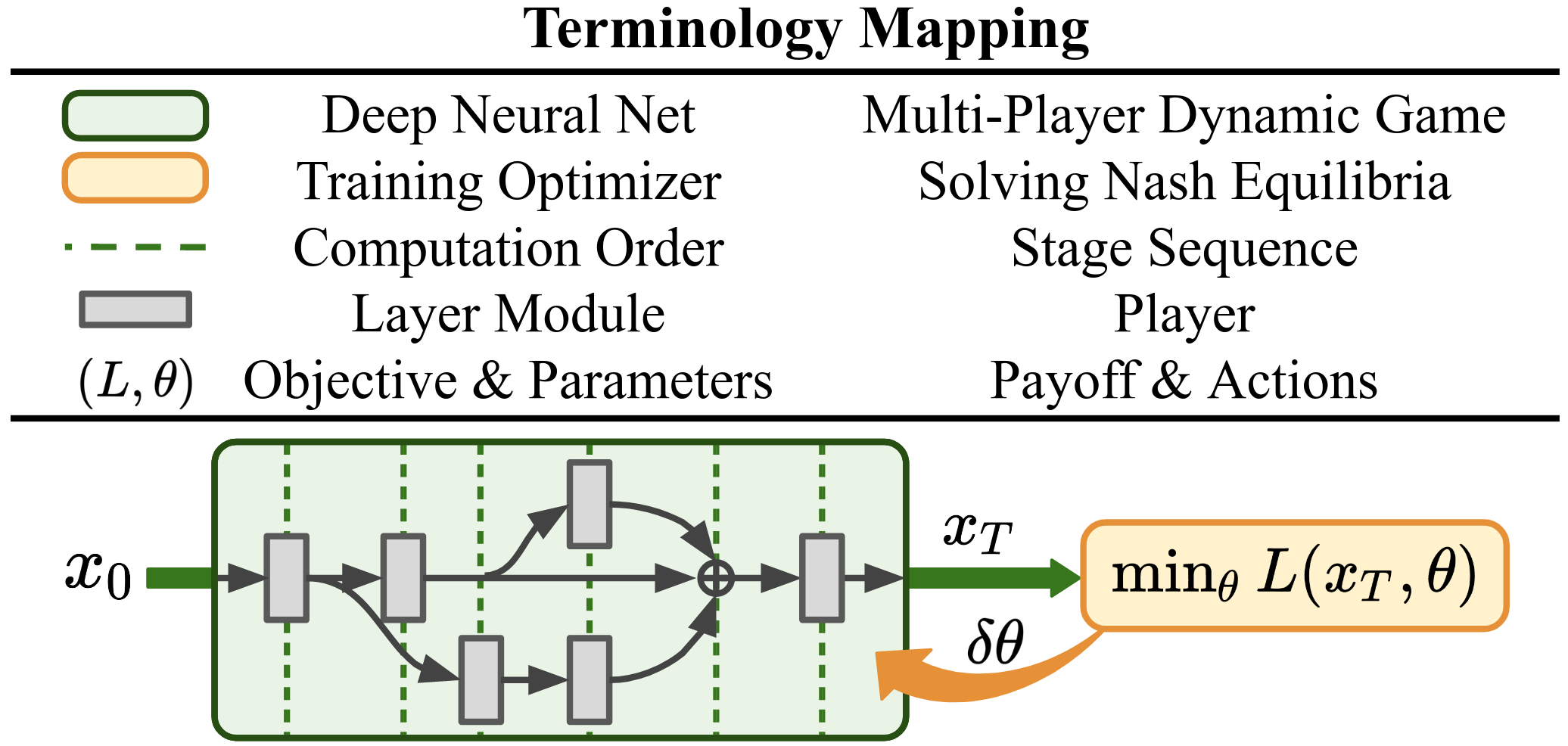}
\vskip -0.1in
\caption{
Dynamic game perspective of generic DNN training process, where we treat layer modules as players
in a dynamic game and solve for the related Nash equilibria (Best viewed in color).
}
\label{fig:1}
\end{center}
\vskip -0.2in
\end{figure}

This dynamical system viewpoint also {motivates} control-theoretic analysis,
which further recasts the network weight as control.
With that, the training process can be viewed as an optimal control problem,
as both methodologies aim to optimize some variables (weights \emph{v.s.} controls) subjected to the {chain structure} (network \emph{v.s.} dynamical system).
This connection has {lead to} theoretical characterization of the learning process \cite{han2018mean,hu2019mean,liu2019deep}
and practical methods for hyper-parameter adaptation \cite{li2017stochastic} or computational acceleration \cite{gunther2020layer,zhang2019you}.

Development of algorithmic progress, however, remains relatively limited.
This is because
OCT-inspired training methods, by construction,
are restricted to network class that
resembles Markovian state-space models \cite{liu2021differential,li2018optimal,li2017maximum}.
This raises questions of their flexibility and scalability to training modern architectures
composed of complex dependencies between layers.
It is unclear whether
this interpretation of dynamical system and optimal control remains suitable, or how it should be adapted, under those cases.

In this work, we address the aforementioned issues using dynamic game theory,
a discipline of interactive decision making \cite{yeung2006cooperative}
built upon optimal control and game theory.
Specifically, we propose to
treat each layer as a player in a dynamic game connected through the network propagation.
The additional dimension gained from multi-player allows us to
generalize OCT-inspired methods to accept a much richer network class.
Further, introducing game-theoretic analysis, \eg \emph{information structure},
provides a novel
algorithmic connection between different classes of training methods from a Nash equilibria standpoint (Fig.~\ref{fig:1}).

Unlike prior game-related works,
which typically cast the whole network as a player competing over training iteration \cite{goodfellow2014generative,balduzzi2018mechanics},
the (p)layers in our dynamic game interact along the network propagation.
This naturally leads to a coalition game since all players share the same objective.
The resulting cooperative training scheme
urges the network to yield group optimality, or Pareto efficiency \cite{pardalos2008pareto}.
As we will show through experiments,
this improves convergence of training modern architectures,
as richer information flows between layers to compute the updates.
We name our method Dynamic Game Theoretic Neural Optimizer (\textbf{DGNOpt}).

Notably,
casting the network as a realization of the game has appeared in analyzing
the convergence of Back-propagation \cite{balduzzi2016deep} or contribution of neurons \cite{stier2018analysing,ghorbani2020neuron}.
Our work instead focuses on developing game-theoretic training methods
and how they can be connected to, or generalize, existing optimizers.
In summary, we present the following contributions.
\begin{itemize}[leftmargin=15pt]
\item
We draw a novel algorithmic characterization from the Nash equilibria perspective by framing the training process as solving a multi-player dynamic game.

\item
We propose \textbf{DGNOpt}, a game-theoretic optimizer that generalizes OCT-inspired methods to richer network class
and encourages {cooperative updates} among layers with an enlarged information structure.

\item
Our method achieves competitive results on image classification with residual and inception nets,
enabling rich applications from robust control and bandit analysis.

\end{itemize}

\section{Preliminaries} \label{sec:prel}

\textbf{Notation:}
Given a real-valued function $\mathcal{F}_s$ indexed by $s~{\in}~\mathcal{S}$,
we shorthand its derivatives evaluated on $(\vx_s,\theta_s)$ as
$\nabla_{\vx_s} \mathcal{F}_s               {\equiv} \mathcal{F}^s_{\vx}$,
$\nabla_{\vx_s}^2 \mathcal{F}_s             {\equiv} \mathcal{F}^s_{\vx\vx}$, and
$\nabla_{\vx_s}\nabla_{\theta_s} \mathcal{F}_s {\equiv} \mathcal{F}^s_{\vx\theta}$, etc.
Throughout this work, we will preserve
$n\in\{1,{\cdots},N\}$ as the player index and
$t\in\{0,1,{\cdots},T{-}1\}$ as the propagation order along the network, or equivalently the stage sequence of the game (see Fig.~\ref{fig:1}).
We will abbreviate them as $n{\in}[N]$ and $t{\in}[T]$ for brevity.
Composition of functions is denoted by $f(g(\cdot)) \equiv (f \circ g)(\cdot)$.
We use $\Inv$, $\odot$ and $\otimes$ to denote pseudo inversion, Hadamard and Kronecker product.
A complete notation table can be found in Appendix~\ref{sec:notation}.

\subsection{Training Feedforward Nets with Optimal Control} \label{sec:ocp-dnn}
Let the layer propagation rule in {feedforward} networks
(\eg fully-connected and convolution networks)
with depth $T$ be
\begin{align}
	\vz_{t+1} =& f_t(\vz_{t}, \theta_{t}),
	\quad t \in [T],
	\label{eq:layer-prop}
\end{align}
where $\vz_t$ and $\theta_t$
represent the vectorized hidden state and parameter at each layer $t$.
For instance, $\theta_t \coloneqq \vectorize([\mW_t, \vb_t])$ for a fully-connected layer,
$f_t(\vz_t,\theta_t) \coloneqq \sigma(\mW_t \vz_t { +} \vb_t)$, with nonlinear activation $\sigma(\cdot)$. %
Equation (\ref{eq:layer-prop}) can be interpreted as a discrete-time Markovian model propagating the state $\vz_t$ with the tunable variable $\theta_t$.
With that, the training process,
\ie finding optimal parameters $\{\theta_t:t{\in}[T] \}$ for all layers, can be described by Optimal Control Programming (OCP),
\begin{align}
	\min_{\theta_t: t\in[T]} L \coloneqq \left[ \phi(\vz_{T}) + \sum_{t=0}^{T-1}\ell_t(\theta_{t}) \right]
	\quad
	\text{\emph{s.t.} (\ref{eq:layer-prop})}  .
	\label{eq:ocp-obj}
\end{align}
The objective $L$ consists of a loss $\phi$ incurred by the network prediction~$\vz_T$ (\eg cross-entropy in classification)
and the layer-wise regularization $\ell_t$ (\eg weight decay).
{Despite (\ref{eq:ocp-obj}) considers only one data point $\vz_0$,
it can be easily modified to accept batch training \citep{han2018mean}.}
Hence, minimizing $L$ sufficiently describes the training process.

Equation (\ref{eq:ocp-obj}) provides an OCP characterization of  training feedforward networks.
First, the {optimality principles} to OCP, according to standard optimal control theory,
typically involve solving some time-dependent objectives recursively from the terminal stage $T$.
Previous works have shown that these backward processes relate closely to the computation of Back-propagation \cite{li2017maximum,liu2021differential}.
Further,
the parameter update of each layer, $\theta_t {\leftarrow} \theta_t {-} \delta \theta_t$, can
be seen as solving these layer-wise OCP objectives with certain approximations.
To ease the notational burden, we leave a thorough discussion in Appendix~\ref{sec:app2}.
We stress that this intriguing connection
is, however, limited to the particular network class described by (\ref{eq:layer-prop}).

\subsection{Multi-Player Dynamic Game (MPDG)} \label{sec:prelim-dyn-game}
Following the terminology in \citet{yeung2006cooperative},
in a discrete-time $N$-player dynamic game, Player $n$ commits to the action $\theta_{t,n}$ at each stage $t$ and seeks to minimize
\begin{align*}
	L_n(\bar{\theta}_n&; \bar{\theta}_{\neg n}) \coloneqq \left[ \phi_n(\vx_{T}) + \sum_{t=0}^{T-1}\ell_{t,n}(\theta_{t,1},{\cdots},\theta_{t,N}) \right]
	\\
	\text{\emph{s.t.} }
	\vx_{t+1} &{=} F_t(\vx_{t}, \theta_{t,1},{\cdots},\theta_{t,N}), \text{ }\text{ }
	{\theta}_{t,n} \equiv {\theta}_{t,n}(\eta_{t,n}),
	\numberthis \label{eq:dya-game-obj}
\end{align*}
where
$\bar{\theta}_n \coloneqq \{ \theta_{t,n}: t \in [T] \}$
denotes the action sequence for Player $n$ throughout the game.
The set $\neg n \coloneqq \{ i {\in} [N]: i {\neq} n \} $ includes all players except Player $n$.
The key components that characterize an MPDG (\ref{eq:dya-game-obj}) are detailed as follows.
\begin{itemize} %
\item
\textbf{Shared dynamics $F_t$.} The stage-wise propagation rule for $\vx_t$, affected by actions across all players $\theta_{t,n}, \forall n$.
\item
\textbf{Payoff/Cost $L_n$.} The objective for each player that accumulates the costs $(\phi_n,\ell_{t,n})$ incurred at each stage.
\item
\textbf{Information structure $\eta_{t,n}$.} A set of information available to Player $n$ at $t$
for making the decision ${\theta}_{t,n}$.
\end{itemize}
The Nash equilibria $\{(\bar{\theta}^*_1,\cdots,\bar{\theta}^*_N)\}$ to (\ref{eq:dya-game-obj}) is a set of stationary points where no player has the incentive to deviate from the decision.
Mathematically, this can be described by
\begin{align*}
	L_n(\bar{\theta}^*_n; \bar{\theta}^*_{\neg n}) \le L_n(\bar{\theta}_n; \bar{\theta}^*_{\neg n}),
	\quad \forall n \in [N], \text{ } \forall \bar{\theta}_n \in \bar{\Theta}_n,
\end{align*}
where $\bar{\Theta}_n$ denotes the set of admissible actions for Player~$n$.
When the players agree to cooperate upon an agreement on a set of strategies and a mechanism to distribute the payoff/cost,
a cooperative game (CG) of (\ref{eq:dya-game-obj}) will be formed.
CG requires additional optimality principles to be satisfied.
This includes
\emph{(i)} group rationality (GR), which requires all players to optimize their joint objective,
\begin{align}
\begin{split}
	L^* \coloneqq
	\min_{\bar{\theta}_1,\cdots,\bar{\theta}_N} \textstyle \sum_{n=1}^N L_n(\bar{\theta}_n; \bar{\theta}_{\neg n}),
	\label{eq:group-ration}
\end{split}
\end{align}
and \emph{(ii)} individual rationality (IR), which requires the cost distributed to each player from $L^*$ be at most the cost he/she will suffer if plays against others non-cooperatively.
Intuitively, IR justifies the participation of each player in CG.

\section{Dynamic Game Theoretic Perspective} \label{sec:3}

\subsection{Formulating DNNs as Dynamic Games} \label{sec:dnn-dg}
In this section, we draw a novel perspective between
the three components in MPDG (\ref{eq:dya-game-obj}) and the training process of generic (\ie non-Markovian) DNNs.
Given a network composed of the layer modules $\{f_i(\cdot,\theta_i)\}$,
where $\theta_i$ denotes the trainable parameters of layer $f_i$ similar to (\ref{eq:layer-prop}),
we treat each layer as a player in MPDG.
The network can be converted into the form of $F_t$ by indexing $i\coloneqq(t,n)$,
where $t$ represents the sequential order from network input to prediction,
and   $n$ denotes the index of layers aligned at $t$.
Fig.~\ref{fig:dy-net} demonstrates such an example for a residual block.
When the network propagation collapses from multiple paths to a single one,
we can consider either duplicated players sharing the same path or dummy players with null action space. %
Hence, \wlg we will treat $N$ as fixed over $t$. %
Notice that the assignment $i\coloneqq(t,n)$ may not be unique.
{We will discuss its algorithmic implication later in $\S$\ref{sec:5-2}.}

Once the shared dynamics is constructed, the payoff
$L_n$ can be readily linked to the training objective.
Since $\ell_{t,n}$ corresponds to the weight decay for layer $f_{t,n}$, it follows that
$\ell_{t,n} \coloneqq \ell_{t,n}(\theta_{t,n})$.
Also, we will have $\phi_{n} \coloneqq \phi$
whenever all (p)layers share the same task,\footnote{
	One of the examples for multi-task objective is the \emph{auxiliary loss} used in deep reinforcement learning \cite{jaderberg2016reinforcement}.
} \eg in classification.
In short, the network architecture and training objective respectively characterize the structure of a dynamic game and its payoff.

\begin{figure}[t]
\vskip 0.05in
\begin{center}
\includegraphics[width=0.9\linewidth]{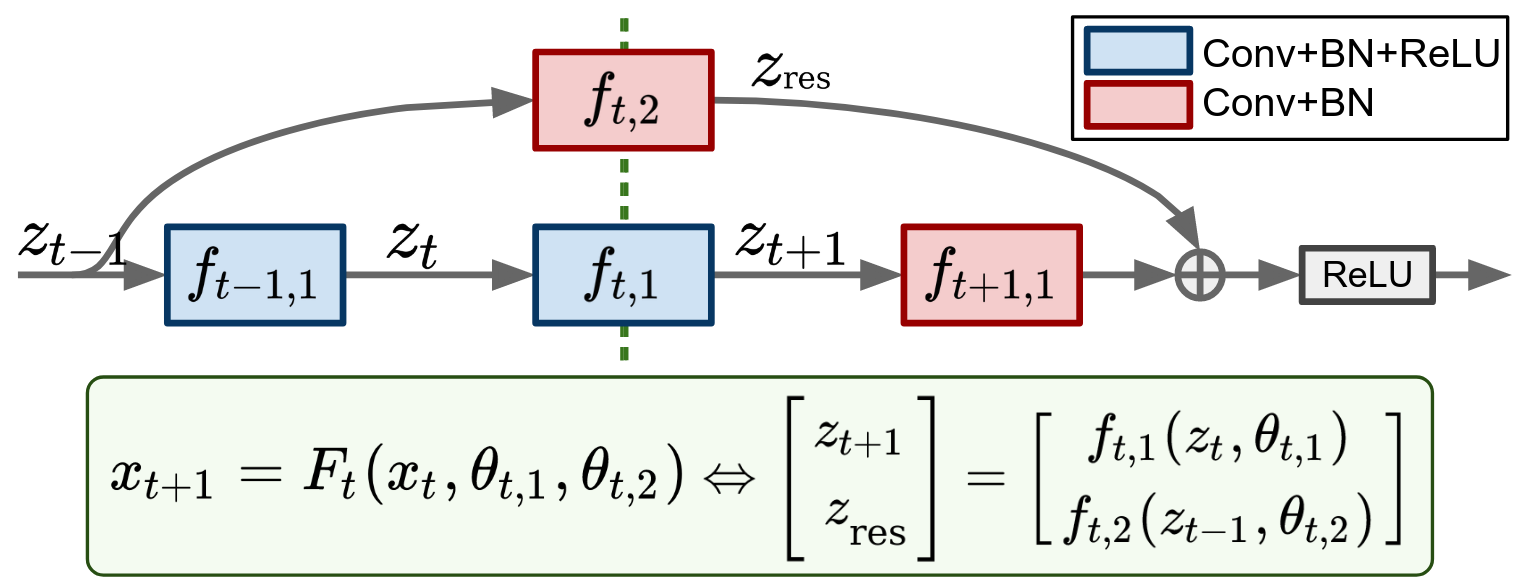}
\vskip -0.1in
\caption{Example of representing a residual block as $F_t$.
Note that $\vx_t$ augments all hidden states across parallel paths.
}
\label{fig:dy-net}
\end{center}
\vskip -0.1in
\end{figure}

\subsection{Information Structure and Nash Optimality} \label{sec:3.2}

We now turn into the role of information structure $\eta_{t,n}$.
Standard game-theoretic analysis suggests that $\eta_{t,n}$ determines the type of Nash equilibria inherited in the MPDG \cite{petrosjan2005cooperative}.
Below we introduce several variants that are of our interests,
starting from the one with the least structure.
\begin{definition}[Open-loop Nash equilibrium (OLNE)]
	Let $\etaOL~{\coloneqq}~\{\vx_0\}$ be the open-loop information structure.
	Then a set of action,
	$\{{\theta}^*_{t,n}: \forall t, n\}$,
	provides an OLNE to (\ref{eq:dya-game-obj})
	if
	\begin{align*}
		\theta^*_{t,n} = \argmin_{\theta_{t,n}} \text{ } &H_{t,n}(\vx_t,\vp_{t+1,n},\theta_{t,n}, \theta^*_{t,\neg n}),
		\numberthis \label{eq:olne} \\
		\text{where }
		\theta^*_{t,n} \equiv \theta^*_{t,n}(&\etaOL) \text{ and }
		H_{t,n} \coloneqq \ell_{t,n} + F_t^\transpose \vp_{t+1,n}
	\end{align*}
	is the Hamiltonian for Player $n$ at stage $t$.
	The co-state $\vp_{t,n}$ is a vector of the same size as $\vx_t$ and can be simulated from the backward adjoint process,
	$({\vp}_{t,n},\vp_{T,n}) {\coloneqq} (H^{t,n}_{\vx}, \phi^n_{\vx})$.
\end{definition}
The Hamiltonian objective $H_{t,n}$ varies for each $(t,n)$ and depends on the proceeding stage via co-state $\vp_{t{+}1,n}$.
When $N{=}1$, (\ref{eq:olne}) degenerates to the celebrated Pontryagin principle \citep{pontryagin1962mathematical},
which provides the necessary condition to OCP (\ref{eq:ocp-obj}).
This motivates the following result.
\begin{proposition}
	\label{prop:nash-open-loop}
	Solving
	$\theta_{t,n}^{*}{=}\argmin H_{t,n}$
	with the iterative update, $\theta_{t,n}\leftarrow\theta_{t,n}{-}\mM^\Inv H^{t,n}_{\theta}$, recovers the descent direction of standard training methods.
	Specifically, setting
	\bgroup
	\arraycolsep=3pt
	\begin{align*}
		\mM {\coloneqq}
		\left\{\begin{array}{l} {\mI} \\ {\diag\left(\sqrt{\Hutn \odot \Hutn}\right)} \\ {\Hutn\Hutn^\transpose}  \end{array}\right.
		\text{yields}
		\left\{\begin{array}{l} \textnormal{SGD} \\ \textnormal{RMSprop} \qquad . \\ \textnormal{Gauss-Newton} \end{array}\right.
	\end{align*}
	\egroup
\end{proposition}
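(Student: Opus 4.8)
The plan is to reduce the statement to a single identity — that the Hamiltonian gradient in a player's own action equals the gradient of the training loss with respect to that layer's parameters, $H^{t,n}_{\theta}=\nabla_{\theta_{t,n}}L$ — after which each choice of $\mM$ is a routine substitution. First I would establish this identity by backward induction on $t$ along the trajectory $\{\vx_t\}$ produced by the shared dynamics $F_t$ under a fixed action profile. The terminal case $\vp_{T,n}=\phi^n_{\vx}=\nabla_{\vx_T}L_n$ is immediate from $\phi_n\coloneqq\phi$. For the inductive step, differentiating $\vx_{t+1}=F_t(\vx_t,\theta_{t,1},\dots,\theta_{t,N})$ and using that the running cost is $\ell_{t,n}\coloneqq\ell_{t,n}(\theta_{t,n})$ (so $\ell^{t,n}_{\vx}=0$) gives $\vp_{t,n}=H^{t,n}_{\vx}=(\Fxt)^\transpose\vp_{t+1,n}=(\Fxt)^\transpose\nabla_{\vx_{t+1}}L_n=\nabla_{\vx_t}L_n$; that is, the backward adjoint recursion of the OLNE definition reproduces exactly Back-propagation, as already noted around (\ref{eq:ocp-obj}). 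Differentiating the Hamiltonian $H_{t,n}=\ell_{t,n}+F_t^\transpose\vp_{t+1,n}$ in $\theta_{t,n}$ then yields $H^{t,n}_{\theta}=\ell^{t,n}_{\theta}+(\Fut)^\transpose\vp_{t+1,n}=\nabla_{\theta_{t,n}}L_n=\nabla_{\theta_{t,n}}L$, since the only dependence of $L$ on $\theta_{t,n}$ runs through $\ell_{t,n}$ and through $\vx_{t+1}$.

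Writing $g\coloneqq H^{t,n}_{\theta}=\nabla_{\theta_{t,n}}L$, the prescribed update becomes the preconditioned gradient step $\theta_{t,n}\leftarrow\theta_{t,n}-\mM^\Inv g$. Substituting $\mM=\mI$ gives $\theta_{t,n}\leftarrow\theta_{t,n}-g$, i.e.\ the (stochastic, once the minibatch extension of (\ref{eq:ocp-obj}) is in force) gradient-descent direction. Substituting $\mM=\diag(\sqrt{\Hutn\odot\Hutn})$, read — as is standard for adaptive methods — with the minibatch second-moment estimate of $g$ in place of the single-sample square, makes the $i$-th coordinate of $\mM^\Inv g$ equal to $g_i/\sqrt{\mathbb{E}[g_i^2]}$, which is precisely the RMSprop rescaling. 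Substituting $\mM=\Hutn\Hutn^\transpose$, again averaged over the batch, is the empirical-Fisher / Gauss--Newton matrix $\mathbb{E}[gg^\transpose]$, so $\mM^\Inv g$ is the Gauss--Newton step; the pseudo-inverse $\Inv$ is what absorbs the rank deficiency coming from the finite batch size. Collecting the three cases gives the displayed correspondence.

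The main obstacle is Step 1 in the genuinely multi-player, augmented-state regime: $\vx_t$ stacks the hidden states of all parallel paths, and a player's action $\theta_{t,n}$ generally touches only a sub-block of $\vx_{t+1}$, so one must verify that the block sparsity of $\Fxt$ and $\Fut$ is exactly what makes $(\Fut)^\transpose\vp_{t+1,n}$ collapse to the ordinary backpropagated gradient into the module $f_{t,n}$, and that the dummy/duplicated-player padding used to keep $N$ fixed over $t$ does not perturb this. A second, softer point is that the correspondence "recovers RMSprop / Gauss--Newton'' only modulo the customary batch-expectation and running-average reinterpretations of $\mM$ — taken literally at a single sample, the two nontrivial choices yield sign-SGD and a rank-one update — which is why the claim should be understood up to the same approximations already flagged for the layer-wise OCP objectives.
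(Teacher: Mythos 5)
Your proof is correct and follows essentially the same route as the paper's: both arguments show by backward induction from the terminal stage that the co-state recursion $\vp_{t,n}=H^{t,n}_{\vx}$ reproduces Back-propagation (the paper illustrates the inductive step concretely via the block Jacobian of the residual-block $F_t$, while you state it more abstractly as $\vp_{t,n}=(F^t_{\vx})^\transpose\vp_{t+1,n}=\nabla_{\vx_t}L_n$), conclude that $H^{t,n}_{\theta}$ is the exact gradient of the training loss w.r.t.\ $\theta_{t,n}$, and then read off each choice of $\mM$. Your explicit caveat that the RMSprop and Gauss--Newton cases hold only up to the usual minibatch-expectation and running-average reinterpretations of $\mM$ is a point the paper leaves implicit.
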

Proposition \ref{prop:nash-open-loop}
provides a similar OCP characterization (\emph{c.f.} $\S$\ref{sec:ocp-dnn})
except for a more generic network class represented by $F_t$.
It also gives
our first game-theoretic interpretation of DNN training:
\emph{standard training methods implicitly match an OLNE defined upon the network propagation}.
The proof (see Appendix \ref{sec:app3})
relies on constructing a set of co-state $\vp_{t,n}$ such that $H^{t,n}_{\theta}{\equiv}\nabla_{\theta_{t,n}} H_{t,n}$ gives the exact gradient w.r.t. the parameter of layer $f_{t,n}$.
The degenerate information structure $\etaOL$ implies that optimizers of this class utilize minimal knowledge available from the game (\ie network) structure.
This is in contrast to the following Nash equilibrium which relies on richer information.

\bgroup
\def\arraystretch{1.0}
\begin{table}[t]
	\begin{center}
	\caption{Dynamic game theoretic perspective of DNN training.
	}
	\label{table:game-dnn}
	\vskip 0.1in
	\begin{small}
	\begin{tabular}{>{\hspace{-3pt}}c<{\hspace{-2pt}}!{\vrule width 0.7pt}>{\hspace{-2pt}}c<{\hspace{-3pt}}c<{\hspace{-3pt}}c<{\hspace{-3pt}}}
	\toprule
	\specialcell{ Nash \\ Equilibria } &
	\specialcell{Information \\ Structure } &
	\specialcell{Optimality \\ Principle } &
	\specialcell{Class of \\ Optimizer} \\ [1pt]
	\midrule %
	OLNE & $\etaOL$  & $\min H_{t,n}$  in (\ref{eq:olne}) & Baselines \\ [3pt]
	FNE  & $\etaCL$  & $\min Q_{t,n}$  in (\ref{eq:i-bellman}) & { DGNOpt (ours)} \\ [3pt]
	GR   & $\etaCCG$ & $\min P_t$  in (\ref{eq:gr-bellman}) & {DGNOpt (ours)} \\
	\bottomrule
	\end{tabular}
	\end{small}
	\end{center}
	\vskip -0.1in
\end{table}
\egroup
\begin{definition}[Feedback Nash equilibrium (FNE)]
	Let $\etaCL {\coloneqq} \{\vx_s: s~{\le}~t \}$ be the closed-loop information structure.
	Then a set of strategy,
	$\{ \pi^*_{t,n}: \forall t,n \}$,
	is called a FNE to (\ref{eq:dya-game-obj})
	if it is the solution to the {Isaacs-Bellman} equation (\ref{eq:i-bellman}).
	\begin{align}
	\begin{split}
	  V_{t,n}(\vx_t) = \min_{\pi_{t,n}}\text{ } &Q_{t,n}(\vx_t,\pi_{t,n},\pi^*_{t,\neg n}), \\
	  V_{T,n} = \phi_n, \quad \text{where} \quad
	  &Q_{t,n} \coloneqq \ell_{t,n} + V_{t+1,n} \circ F_t
	\end{split} \label{eq:i-bellman}
	\end{align}
	is the {Isaacs-Bellman} objective for Player $n$ at stage $t$.
	Also, $\pi_{t,n} \equiv \theta_{t,n}(\vx_t; \etaCL)$ denotes any arbitrary mapping from $\vx_t$ to $\theta_{t,n}$,
	conditioned on the closed-loop structure $\etaCL$.
\end{definition}
For the closed-loop information structure $\etaCL$,
each player has complete access to all preceding states until the current stage $t$.
Consequently,
it is preferable to solve for a state-dependent, \ie \emph{feedback}, {strategy} $\pi^*_{t,n}$ rather than a state-independent action $\theta^*_{t,n}$ as in OLNE.
Similar to (\ref{eq:olne}), the Isaacs-Bellman objective $Q_{t,n}$ is constructed for each $(t,n)$,
except now carrying a \emph{function} $V_{t,n}(\cdot)$ backward from the terminal stage, rather than the co-state.
This \emph{value function} $V_{t,n}$ summarizes the optimal cost-to-go for Player $n$ from each state $\vx_t$,
provided all afterward stages are minimized accordingly.
When $N{=}1$, (\ref{eq:i-bellman}) collapses to standard Dynamic Programming (DP; \citet{bellman1954theory}),
which is an alternative optimality principle parallel to the Pontryagin.
For nontrivial $N{>}1$,
solving the FNE optimality~(\ref{eq:i-bellman})
provides a game-theoretic extension for previous DP-inspired training methods,
\eg \citet{liu2021differential}, to generic (\ie non-Markovian) architectures.

\subsection{Cooperative Game Optimality}

Now, let us consider the CG formulation. %
When a cooperative agreement is reached, each player will be aware of how others react to the game.
This can be mathematically expressed by the following information structures,
\begin{align*}
	\etaOCG \coloneqq \{ \vx_0, \theta^*_{t,\neg n} \} \text{ } \text{ and } \text{ }
	\etaCCG \coloneqq \{ \vx_s, \pi^*_{t,\neg n}: s \le t \},
\end{align*}
which enlarge $\etaOL$ and $\etaCL$ with additional knowledge from other players, $\neg n$.
We can characterize the inherited optimality principles similar to OLNE and FNE.
Take $\etaCCG$ for instance, {the joint optimization in GR (\ref{eq:group-ration}) requires }
\begin{definition}[Cooperative feedback solution]
	A set of strategy,
	$\{ \pi^*_{t,n} : \forall t,n \}$,
	provides an optimal feedback solution to the joint optimization (\ref{eq:group-ration})
	if it solves

	\begin{align*}
	  &W_{t}(\vx_t) = \min_{\pi_{t,n}:n\in[N]} P_{t}(\vx_t,\pi_{t,1},{\cdots},\pi_{t,N}), \numberthis \label{eq:gr-bellman} \\
	  W_{T} &= \textstyle\sum_{n=1}^N \phi_n, \quad \text{where } P_{t} \coloneqq \textstyle\sum_{n=1}^N \ell_{t,n} + W_{t+1} \circ F_t
	\end{align*}
	is the ``group-rational'' Bellman objective at stage $t$.
	$\pi_{t,n} \equiv \theta_{t,n}(\vx_t; \etaCCG)$ denotes arbitrary mapping from $\vx_t$ to $\theta_{t,n}$,
	conditioned on the cooperative closed-loop structure $\etaCCG$.
\end{definition}
Notice that (\ref{eq:gr-bellman}) is the GR extension of (\ref{eq:i-bellman}).
Both optimality principles solve for a set of feedback strategies,
except the former considers a joint objective $P_t$ summing over all players.
Hence, it is sufficient to carry a joint value function $W_t$ backward.
We leave the discussion on $\etaOCG$ in Appendix~\ref{sec:app3}.

To emphasize the importance of information structure,
consider the architecture in Fig.~\ref{fig:mnist-a}, where
each pair of parallel layers shares the same input and output;
hence the network resembles a two-player dynamic game with $F_t \coloneqq f_{t,1}{+}f_{t,2}$.
As shown in Fig.~\ref{fig:mnist-b}, providing different information structures to the same optimizer, EKFAC \cite{george2018fast},
greatly affects the training.
Having richer information tends to achieve better performance.
Additionally, the fact that
\begin{align}
	\etaOL \subset \etaCL \subset \etaCCG
	\quad \text{and} \quad \etaOL \subset \etaOCG \subset \etaCCG
	\label{eq:subset-relation}
\end{align}
also implies an algorithmic connection between different classes of optimizers,
which we will explore in $\S$\ref{sec:alg-connect}.

Table \ref{table:game-dnn} summarizes our game-theoretic analysis.
Each information structure suggests its own Nash equilibria and optimality principle,
which characterizes a distinct class of training methods.
We already established the connection between baselines and $H_{t,n}$ in Proposition~\ref{prop:nash-open-loop}.
In the next section, we will derive methods for solving $(Q_{t,n},P_t)$.

\begin{figure}[t]%
  \centering
  \subfloat{
  	\includegraphics[trim=15 5 0 0,clip,width=0.95\linewidth]{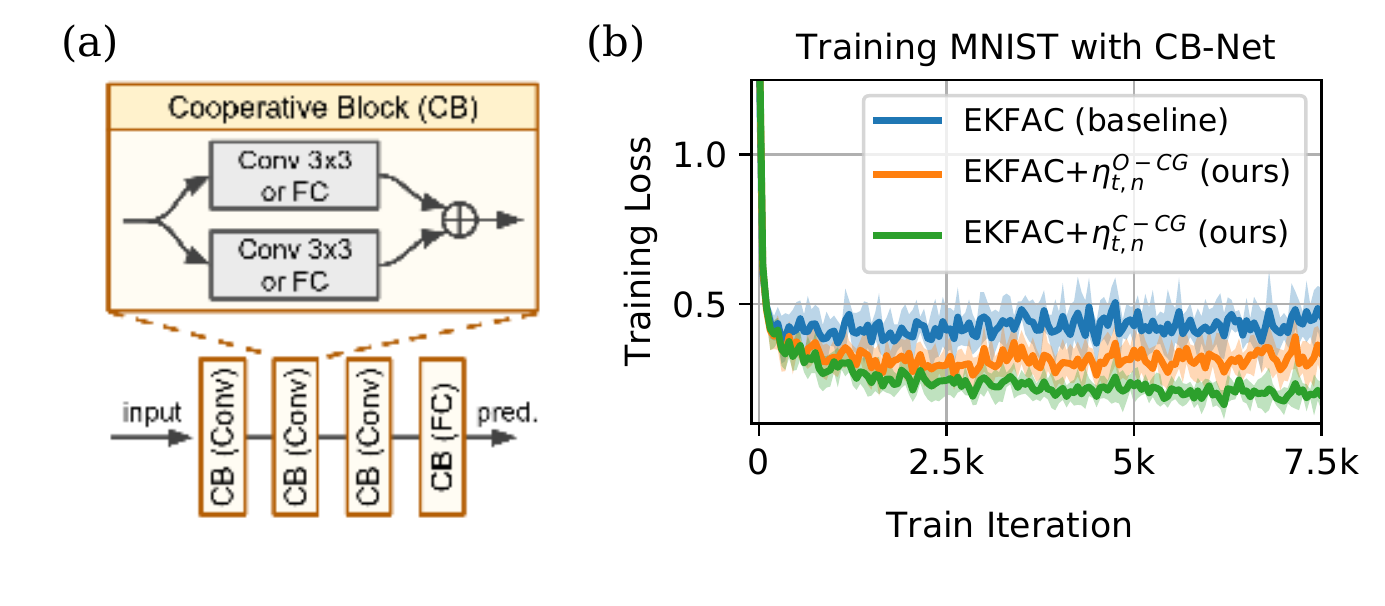}
  	\label{fig:mnist-a}%
  }
  \subfloat{
  	\textcolor{white}{\rule{1pt}{1pt}}
  	\label{fig:mnist-b}%
  }
  \vskip -0.23in
  \caption{
  	(a) The cooperative-block network and (b) its training performance on MNIST when the optimizer
	(EKFAC) is exposed to different information structure $\eta_{t,n}$.
	Note that by Proposition \ref{prop:nash-open-loop}, the EKFAC baseline utilizes only the open-loop structure $\etaOL$.
  }
  \label{fig:mnist}
  \vskip -0.25in
\end{figure}

\section{Training DNN by Solving Dynamic Game} \label{sec:4}

In this section, we derive a new \emph{second-order} method, called
Dynamic Game Theoretic Neural Optimizer (\textbf{DGNOpt}),
that solves (\ref{eq:i-bellman}) and (\ref{eq:gr-bellman})
as an alternative to training DNNs.
While we will focus on
the residual network for its
{popularity and algorithmic simplicity when deriving the analytic update},
we stress that our methodology applies to other architectures.
A full derivation is left in Appendix \ref{sec:app4}.

\subsection{Iterative Update via Linearization}

Computing the game-theoretic objectives $(Q_{t,n},P_t)$
requires knowing
$(F_t,\ell_{t,n},\phi_n)$.
Despite they are well-defined from $\S$\ref{sec:dnn-dg},
carrying $Q_{t,n}$ or $P_t$ as a stage-varying function
is computationally impractical even on a relatively low-dimensional system \cite{tassa2012synthesis},
let alone DNNs.
Since the goal is to derive an {incremental} update given partial (\eg mini-batch) data at each training iteration,
we can consider solving them approximately via \emph{linearization}.

Iterative methods via linearization have been widely used in real-time OCP \cite{pan2015robust,tassa2014control}.
We adopt a similar methodology for its computational efficiency and algorithmic connection to existing training methods (shown later).
First, consider solving the FNE recursion~(\ref{eq:i-bellman}) by ${\pi}^*_{t,n}{\approx}\argmin Q_{t,n}$.
We begin by performing second-order Taylor expansion on $Q_{t,n}$ w.r.t. to the variables that are \emph{observable}
to Player $n$ at stage $t$ according to $\etaCL$.
\bgroup
\arraycolsep=4pt
\begin{align*}
		Q_{t,n} \approx
		{\frac{1}{2}}
		\left[\begin{array}{c} {\mathbf{1}} \\ {\dvx_t} \\ {\dth_{t,n}} \end{array}\right]^{\transpose}
		\left[\begin{array}{lll}
			{Q_{t,n}}    & {\Qxt^\transpose} & {\Qut^\transpose} \\
			{\Qxt} & {\Qxxt} & {\Quxt^\transpose} \\
			{\Qut} & {\Quxt} & {\Quut}
			\end{array}\right]
		\left[\begin{array}{c} {\mathbf{1}} \\ {\dvx_t} \\ {\dth_{t,n}} \end{array}\right]
\end{align*}
\egroup
Note that $\delta \theta_{t,\neg n}$ does not appear in the above quadratic expansion since
it is unobservable according to $\etaCL$.
The derivatives of $Q_{t,n}$ w.r.t. different arguments follow standard chain rule (recall $Q_{t,n} \coloneqq \ell_{t,n} + V_{t+1,n} \circ F_t$),
with the dynamics linearized at some fixed point $(\vx_t,\theta_{t,n})$, \eg
\begin{align*}
	\Qut = \lutn + \FutT \Vxtn, \text{ } \Quxt = \FutT \Vxxtn \Fxt.
\end{align*}

The analytic solution to this quadratic expression is given by $\pi^*_{t,n} {=} \theta_{t,n} {-} \delta \pi^*_{t,n}$, with
the incremental update $\delta \pi^*_{t,n}$ being
\begin{align}
\begin{split}
	\delta \pi_{t,n}^* = \ktn &+ \Ktn \dvx_{t}. \\
	\ktn \coloneqq (\Quutn)^\Inv \Qutn \text{ }\text{ }\text{ } \text{and}& \text{ }\text{ }\text{ } \Ktn \coloneqq (\Quutn)^\Inv \Quxtn
	\label{eq:fne-update}
\end{split}
\end{align}
are called the open and feedback gains.
The superscript $^\Inv$ denotes the pseudo inversion.
Note that $\delta \pi^*_{t,n}$ is only \emph{locally} optimal around the region where the quadratic expansion remains valid.
Since $\vx_{t}$ augments preceding hidden states (\eg $\vx_{t} {\coloneqq} [\vz_{t},\vz_{t-1}]^\transpose$ in Fig.~\ref{fig:dy-net}), (\ref{eq:fne-update})
implies that preceding hidden states contribute to the update via linear superposition.

Substituting the incremental update $\delta \pi^*_{t,n}$ back to the FNE recursion~(\ref{eq:i-bellman}) yields the local expression of the value function $V_{t,n}$, which will be used to compute the preceding update $\delta \pi^*_{t-1,n}$.
Since the computation
depends on $V_{t,n}$ only through its local derivatives $V^{t,n}_\vx$ and $V^{t,n}_{\vx\vx}$,
it is sufficient to propagate these quantities rather than the function itself.
The propagation formula is summarized in (\ref{eq:fne-vxx}).
This procedure ({line 4-7 in Alg.~\ref{alg:dgnopt}}) repeats recursively backward from the terminal to initial stage, similar to Back-propagation.
\begin{align}
\begin{split}
	V_\vx^{t,n} &= Q_\vx^{t,n} {-} Q_{\vx\theta}^{t,n} \ktn,\quad \text{ } V_\vx^{T,n} = \phi_\vx^n, \\
	V_{\vx\vx}^{t,n} &= Q_{\vx\vx}^{t,n} {-} Q_{\vx\theta}^{t,n} \Ktn,\quad V_{\vx\vx}^{T,n} = \phi_{\vx\vx}^n.
	\label{eq:fne-vxx}
\end{split}
\end{align}

\begin{algorithm}[t]
   \caption{Dynamic Game Theoretic Neural Optimizer} %
   \label{alg:dgnopt}
\begin{algorithmic}[1]
   \STATE {\bfseries Input:} dataset $\mathcal{D}$, network $F\equiv \{F_t: t\in[T]\}$
   \REPEAT
   \STATE Compute $\vx_t$ by propagating $\vx_0 \sim \mathcal{D}$ through $F$
   \FOR{$t=T{-}1$ {\bfseries to} $0$ }{\hfill \markgreen{$\rhd$ Solve FNE or GR}}
       \STATE Solve the update $\delta \pi^*_{t,n}$ with (\ref{eq:fne-update}) or (\ref{eq:gr-update})
	   \STATE Solve {${(V_\vx^{t,n}{,}V_{\vx\vx}^{t,n})}$} or {${(W_\vx^t{,}W_{\vx\vx}^t)}$} with (\ref{eq:fne-vxx}) or (\ref{eq:gr-vxx})
   \ENDFOR
   \STATE Set ${\vx}^\prime_0=\vx_0$
   \FOR{$t=0$ {\bfseries to} $T{-}1$}{\hfill \markgreen{$\rhd$ Update parameter}}
	   \STATE Apply $\theta_{t,n} {\leftarrow} \theta_{t,n} {-} \delta \pi^*_{t,n}(\dvx_{t})$ with $\dvx_{t} {=} {\vx}^\prime_{t} {-} \vx_{t}$
	   \STATE Compute ${\vx}^\prime_{t+1} = F_t({\vx}^\prime_t, \theta_{t,1}, \cdots, \theta_{t,N})$ %
   \ENDFOR
   \UNTIL{ converges }
\end{algorithmic}
\end{algorithm}

\def\Gu{{\tilde{\mathbf{G}}}}
\def\II{{\mathbf{I}}}
\def\LL{{\mathbf{L}}}

Derivation for CG follows similar steps
except we consider solving the GR recursion (\ref{eq:gr-bellman}) by ${\pi}^*_{t,n}{\approx}\argmin P_{t}$.
Since all players' actions are now observable from $\etaCCG$,
we need to expand $P_t$ w.r.t. all arguments.
For notational simplicity, let us denote $\vu \equiv \theta_{t,1}, \vv \equiv \theta_{t,2}$
in {Fig.~\ref{fig:dy-net}}.
In the case when each player minimizes $P_t$ independently without knowing the other,
we know the non-cooperative update for Player~2 admits the form\footnote{
    Similar to (\ref{eq:fne-update}), we have $\It \coloneqq \PvvInvt\Pvt$ and $\Lt \coloneqq \PvvInvt\Pvxt$.
} of $\dvv_t = \It + \Lt \dvx_t$.
Now, the locally-optimal cooperative update for Player~1 can be written as
\begin{align*}
	&\delta \pi_{t,1}^* = \kut + \Kut \dvx_t, \text{ where} \numberthis \label{eq:gr-update} \\
	\kut \coloneqq  \PuuCInvt (\Put  &- \Puvt \It), \text{ }
	 \Kut \coloneqq  \PuuCInvt (\Puxt - \Puvt \Lt).
\end{align*}
Similar equations can be derived for Player~2. %
We will refer
$\PuuCt {\coloneqq} \Puut {-} \Puvt\PvvInvt\Pvut$
as the \emph{cooperative precondition}.
The update (\ref{eq:gr-update}), despite seemly complex,
exhibits intriguing properties.
For one, notice that
computing the cooperative open gain $\kut$ for Player~1 involves the non-cooperative open gain $\It$ from Player~2.
In other words, each player adjusts the strategy after knowing the companion's action.
Similar interpretation can be drawn for the feedbacks $\Kut$ and $\Lt$.
Propagation of $(W_\vx^{t},W_{\vx\vx}^{t})$
follows similarly as (\ref{eq:fne-vxx}) once all players' updates are computed.
We leave the complete formula in (\ref{eq:gr-vxx}) (see Appendix \ref{sec:C1}) since it is rather tedious.

Let us discuss the role of $\dvx_t$ and how to compute them.
Conceptually, $\dvx_t$ can be any deviation away from the fixed point $\vx_{t}$ where we expand the objectives,
$Q_{t,n}$ or $P_t$.
In MPDG application,
it is typically set to the state difference when the parameter updates are applied until stage {$t$},
\begin{align}
    \dvx_{t} \coloneqq (F_{t{-}1} {\circ} \cdots {\circ} F_0)
                         (\vx_0, \{\theta + \delta {\pi}^*\}_{{<} t, {\forall} n} ) -
                         \vx_{t} ,
    \label{eq:dx}
\end{align}
where $\{\delta {\pi}^*\}_{< t, \forall n} \coloneqq \{ \delta {\pi}_{s,n}^*: s {<} t, \forall n \}$
collects all players' updates until stage $t$.
In this view, the feedback
compensates all changes, including those that may cause instability, cascading from the preceding layers;
hence it tends to robustify the training process \cite{de1988differential,liu2021differential}. %

Alg.~\ref{alg:dgnopt} presents the pseudo-code of DGNOpt, which consists of
\emph{(i)} the same forward propagation through the network (line 3), \emph{(ii)} a distinct game-theoretic backward process that solves either FNE or GR optimality (line 4-7), and \emph{(iii)} an additional forward pass that applies the feedback updates $\delta \pi^*_{t,n}$ (line 8-12; also Fig.~\ref{fig:coop-update}).
We stress that Alg.~\ref{alg:dgnopt} accepts
any generic DNN so long as it can be represented by $F_t$.

\begin{figure}[t]
\vskip 0.05in
\begin{center}
\includegraphics[width=0.935\linewidth]{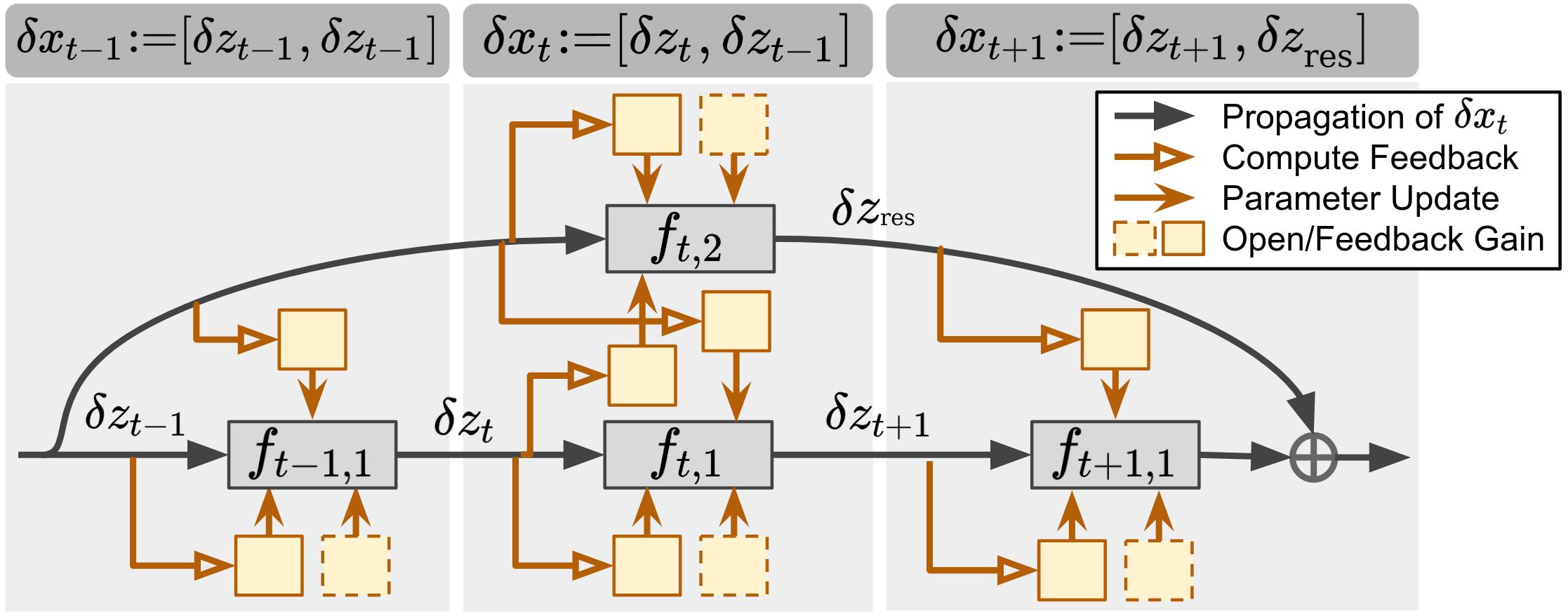}
\vskip -0.125in
\caption{
How $\delta \pi^*_{t,n}$ is applied (\emph{c.f.} line 8-12 in Alg.~\ref{alg:dgnopt}) to a residual block.
We compute $\dvx_t$ with a forward propagation (\ref{eq:dx}) and simultaneously update the parameter.
The solid yellow box denotes the feedback dependent on the preceding hidden states $\vz_{s\le t}$.
}
\label{fig:coop-update}
\end{center}
\vskip -0.1in
\end{figure}

\subsection{Curvature Approximation} \label{sec:4.2}

Naively inverting the parameter curvature, \ie
$(\Quutn)^\Inv$ and $\PuuCInvt$, can be computationally inefficient and sometimes unstable
for practical training.
To mitigate the issue,
we adopt curvature amortizations \cite{kingma2014adam,hinton2012neural} used in DNN training.
These methods naturally fit into our framework by recalling Proposition~\ref{prop:nash-open-loop} that
different baselines differ in how they estimate the curvature $H^{t,n}_{\theta\theta}$ for the preconditioned update $H^{t.n\text{ }\Inv}_{\theta\theta} \Hutn$.
With this in mind, we can estimate the FNE parameter curvature $\Quutn$ with
\begin{align}
	\Quutn \approx \Qutn\Qutn^\transpose \text{ }\text{ or }\text{ } \diag( \sqrt{\Qutn \odot \Qutn} ),
    \label{eq:q-approx}
\end{align}
which resembles the Gauss-Newton (GN) matrix or its adaptive diagonal matrix (as appeared in RMSProp and Adam).

As for $\PuuCInvt$, which contains an \emph{inner} inversion $\PvvInvt$ inside $\PuuCt$,
we propose a new approximation inspired by
the Kronecker factorization (KFAC; \citet{martens2015optimizing}).
KFAC factorizes the GN matrix with two smaller-size matrices.
We leave the complete discussion on KFAC, as well as the proof of the following result, in Appendix~\ref{sec:C2}.
\begin{proposition}[KFAC for $\PuuCt$] \label{prop:coop}
	Suppose
	$\Puut$ and $\Pvvt$ are factorized
	with some vectors $\vz_1,\vz_2,\vg_1,\vg_2$ by
	\begin{align*}
	  \Puut &\approx \Axx \otimes \Bxx \eqqcolon \Auu \otimes \Buu, \\
	  \Pvvt &\approx \Ayy \otimes \Byy \eqqcolon \Avv \otimes \Bvv,
	\end{align*}
    where the expectation is taken over the mini-batch data.
	Let
	$\Auv \coloneqq \Axy$ and
	$\Buv \coloneqq \Bxy$,
	then the cooperative precondition matrix in (\ref{eq:gr-update}) can be factorized by
	\begin{align*}
	  \PuuCt
	  &\approx \AuuC \otimes \BuuC \numberthis \label{eq:coop-kfac} \\
	  &= (\Auu-\Auv\AvvInv\AuvT) \otimes (\Buu-\Buv\BvvInv\BuvT).
	\end{align*}

\end{proposition}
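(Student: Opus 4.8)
The plan is to carry the Schur--complement structure of the cooperative precondition $\PuuCt = \Puut - \Puvt\PvvInvt\Pvut$ through the Kronecker factorization, in three steps: (i) extend to the cross block $\Puvt$ the same KFAC factorization already assumed for $\Puut$ and $\Pvvt$; (ii) recognize $\PuuCt$ (in the Gauss--Newton regime) as a conditional second moment; and (iii) split that second moment, in the usual KFAC manner, onto the activation factor and the gradient factor separately. This will yield the readily invertible form $\PuuCInvt \approx \AuuCInv \otimes \BuuCInv$.

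For step~(i), I would observe that the argument underlying the hypotheses $\Puut\approx\Auu\otimes\Buu$, $\Pvvt\approx\Avv\otimes\Bvv$ --- writing the expanded parameter gradient of layer $f_{t,n}$ as $\vz_n\otimes\vg_n$, so that the Gauss--Newton block is $\E[(\vz_i\otimes\vg_i)(\vz_j\otimes\vg_j)^\transpose]=\E[\vz_i\vz_j^\transpose\otimes\vg_i\vg_j^\transpose]$, then factoring the expectation under the KFAC independence assumption --- applies verbatim to the $(1,2)$ block, giving $\Puvt\approx\Auv\otimes\Buv$ with $\Auv=\Axy$, $\Buv=\Bxy$ as in the statement, and hence $\Pvut=(\Puvt)^\transpose\approx\AuvT\otimes\BuvT$. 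Plugging this into the Schur complement and using $(\mA\otimes\mB)^\Inv=\mA^\Inv\otimes\mB^\Inv$ together with the mixed-product rule $(\mA\otimes\mB)(\mC\otimes\mD)=(\mA\mC)\otimes(\mB\mD)$ collapses the middle term,
\begin{align*}
\Puvt\PvvInvt\Pvut &\approx (\Auv\otimes\Buv)(\AvvInv\otimes\BvvInv)(\AuvT\otimes\BuvT) \\
&= (\Auv\AvvInv\AuvT)\otimes(\Buv\BvvInv\BuvT),
\end{align*}
so that $\PuuCt\approx\Auu\otimes\Buu-(\Auv\AvvInv\AuvT)\otimes(\Buv\BvvInv\BuvT)$.

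The last move --- replacing this difference of two Kronecker products by the single product $(\Auu-\Auv\AvvInv\AuvT)\otimes(\Buu-\Buv\BvvInv\BuvT)=\AuuC\otimes\BuuC$ --- is the step I expect to be the crux, since the two expressions agree only up to the mixed terms $\Auu\otimes(\Buv\BvvInv\BuvT)$ and $(\Auv\AvvInv\AuvT)\otimes\Buu$, so I will have to argue that discarding them is no coarser than the approximations KFAC itself tolerates. The transparent way to present this is via steps~(ii)--(iii): the Gauss--Newton $\PuuCt$ is the second moment of the residual of $\vz_1\otimes\vg_1$ linearly regressed on $\vz_2\otimes\vg_2$; feeding the KFAC factorization into the regression coefficient turns that residual into $\vz_1\otimes\vg_1-(\Auv\AvvInv\vz_2)\otimes(\Buv\BvvInv\vg_2)$; a further KFAC-type factorization replaces it by $\vr_z\otimes\vr_g$ with the marginal residuals $\vr_z\coloneqq\vz_1-\Auv\AvvInv\vz_2$ and $\vr_g\coloneqq\vg_1-\Buv\BvvInv\vg_2$; and one last independence split sends $\E[(\vr_z\otimes\vr_g)(\vr_z\otimes\vr_g)^\transpose]$ to $\E[\vr_z\vr_z^\transpose]\otimes\E[\vr_g\vr_g^\transpose]$, which by orthogonality of a regression residual is exactly $\AuuC\otimes\BuuC$, \ie \eqref{eq:coop-kfac}. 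The Kronecker algebra in the middle is routine; the only real content is tracking which approximation is used where and checking each one stays within the KFAC regime.
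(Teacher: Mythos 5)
Your derivation is correct in the sense that every step is an approximation of the same character that KFAC itself makes, and it does land on (\ref{eq:coop-kfac}); but it takes a genuinely different route from the paper's. You factorize each parameter-curvature \emph{block} separately ($\Puut\approx\Auu\otimes\Buu$, $\Puvt\approx\Auv\otimes\Buv$, $\Pvvt\approx\Avv\otimes\Bvv$) and only then take the Schur complement of the assembled blocks, which forces you to confront the fact that $\Auu\otimes\Buu-(\Auv\AvvInv\AuvT)\otimes(\Buv\BvvInv\BuvT)$ is not itself a Kronecker product; you repair this with the regression-residual reading of $\PuuCt$ plus one further independence split. The paper instead factorizes the \emph{joint} curvature over both players at once, $\E[P_{\theta}^t {P_{\theta}^t}^\transpose]\approx\E[\vx_t\vx_t^\transpose]\otimes\E[W_{\vx}^{t+1}{W_{\vx}^{t+1}}^\transpose]$ as in (\ref{eq:coop-kfac2}), so that the $2\times2$ block structure --- and hence the Schur complement --- lives \emph{inside} each small factor, see (\ref{eq:ABcoop}); inverting the Kronecker product is then exact, the Player-1 component of the resulting update comes out as $\vectorize(\BuuCInv(\cdot)\AuuCInvT)$ directly, and $\PuuCt\approx\AuuC\otimes\BuuC$ is read off by matching against the form of (\ref{eq:gr-update}). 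In other words, the crux you correctly identify is an artifact of the order of operations (factorize-then-Schur versus Schur-then-factorize) and simply never arises in the paper's derivation. What your route buys is modularity and a clean statistical interpretation --- $\AuuC$ and $\BuuC$ are the second moments of the marginal regression residuals $\vz_1-\Auv\AvvInv\vz_2$ and $\vg_1-\Buv\BvvInv\vg_2$, which makes transparent why the cooperative precondition is ``smaller'' than $\Puut$ --- at the price of one extra factorization (of the joint residual $\vz_1\otimes\vg_1-(\Auv\AvvInv\vz_2)\otimes(\Buv\BvvInv\vg_2)$ into a product of marginal residuals) that the paper does not need. What the paper's route buys is that the same computation simultaneously yields the implementable formula (\ref{eq:coop-open}) for $\kut$, including the KFAC treatment of the cross term $\Puvt\It$, which your derivation would still have to supply separately.
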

In practice, we set
$(\vz_1,\vz_2,\vg_1,\vg_2){ \coloneqq} (\vz_{t},\vz_{t-1},W^{t+1}_{\vz_t},W^{t+1}_{\vz_{t-1}})$
for the residual block in Fig.~\ref{fig:dy-net} or \ref{fig:coop-update}.
With Proposition \ref{prop:coop}, we can compute the update, take $\kut$ for instance, by
\begin{align}
	\kut
	=\vectorize(\BuuCInv(\Put-\Buv \vectorize^\Inv(\It) \AuvT  )\AuuCInvT),
    \label{eq:coop-open}
\end{align}
where $\vectorize^\Inv$ is the inverse operation of vectorization ($\vectorize$).

Another computation source comes from the curvature w.r.t. the MPDG state,
\ie $V_{\vx\vx}^{t,n}$ and $W_{\vx\vx}^{t}$.
Here, we approximate them with low-rank matrices using either Gauss-Newton or their top eigenspace.
These are rather reasonable approximations since
it has been constantly observed that these Hessians are highly degenerate for DNNs
\cite{wu2020dissecting,papyan2019measurements,sagun2017empirical}.
With all these, we are able to train modern DNNs by solving their corresponding dynamic games,
(\ref{eq:i-bellman}) or (\ref{eq:gr-bellman}),
with a runtime comparable to other first and second-order methods (see {Fig.~\ref{fig:runtime}}).

\addtocounter{footnote}{-1}

\bgroup
\begin{figure*}[t]
    \vskip -0.05in
    \centering
    \begin{minipage}{0.6\textwidth}
        \centering
        \captionsetup{type=table}
        \captionsetup{justification=centering}
        \caption{Accuracy (\%) of residual-based networks (averaged over $6$ random seeds)}
        \label{table:clf}
        \vskip 0.04in
        \begin{small}
        \begin{tabular}{l!{\vrule width 1pt}cccc!{\vrule width 0.5pt}c!{\vrule width 0.5pt}c} %
        \toprule
        \multirow{2}{*}{Dataset} &
        \multicolumn{5}{c|}{Baselines (\ie $\min H_{t,n}$ {in} OLNE)}
        & \multirow{2}{*}{\specialcell{\textbf{DGNOpt} \\ \textbf{(ours)}}} \\ [1pt]
        & SGD & RMSProp & Adam & \multicolumn{1}{c}{EKFAC} & {EMSA} & \\
        \midrule
        {MNIST}     &  98.65  &  98.61 &  98.49  &  \textbf{98.77}  & 98.25 & {98.76}  \\
        {SVHN}      &  88.58  &  88.96 &  89.20  &  88.75  & 87.40  &  \textbf{89.22}
        \\
        {CIFAR10}   &  82.94 &  83.75  &  85.66  &  85.65  & {75.60}   & \textbf{85.85}  \\
        {CIFAR100}  &  71.78  &  71.65 &  71.96  &  71.95  & {62.63} &  \textbf{72.24} \\ %
        \bottomrule
        \end{tabular}
        \end{small}
        \vskip -0.08in
        \centering
        \captionsetup{type=table}
        \captionsetup{justification=centering}
        \caption{Accuracy (\%) of inception-based networks (averaged over $4$ random seeds)}
        \label{table:clf2}
        \vskip 0.04in
        \begin{small}
        \begin{tabular}{l!{\vrule width 1pt}cccc!{\vrule width 0.5pt}c!{\vrule width 0.5pt}c} %
        \toprule
        \multirow{2}{*}{Dataset} &
        \multicolumn{5}{c|}{Baselines (\ie $\min H_{t,n}$ {in} OLNE)}
        & \multirow{2}{*}{\specialcell{\textbf{DGNOpt} \\ \textbf{(ours)}}} \\ [1pt]
        & SGD & RMSProp & Adam & \multicolumn{1}{c}{EKFAC} & {EMSA} & \\
        \midrule
        {MNIST}     &  97.96 & 97.75 & 97.72 &  97.90  & 97.39 & \textbf{98.03}  \\
        {SVHN}      &  87.61 &  86.14 &  86.84 &  88.89  & 82.68 &  \textbf{88.94} \\
        {CIFAR10}   &  76.66 &  74.38 &  75.38 &  77.54  & {70.17}   & \textbf{77.72}  \\
        \bottomrule
        \end{tabular}
        \end{small}
    \end{minipage}
    $\text{ }$
    \begin{minipage}{0.38\textwidth}
        \vskip 0.05in
        \centering
        \includegraphics[width=0.93\columnwidth]{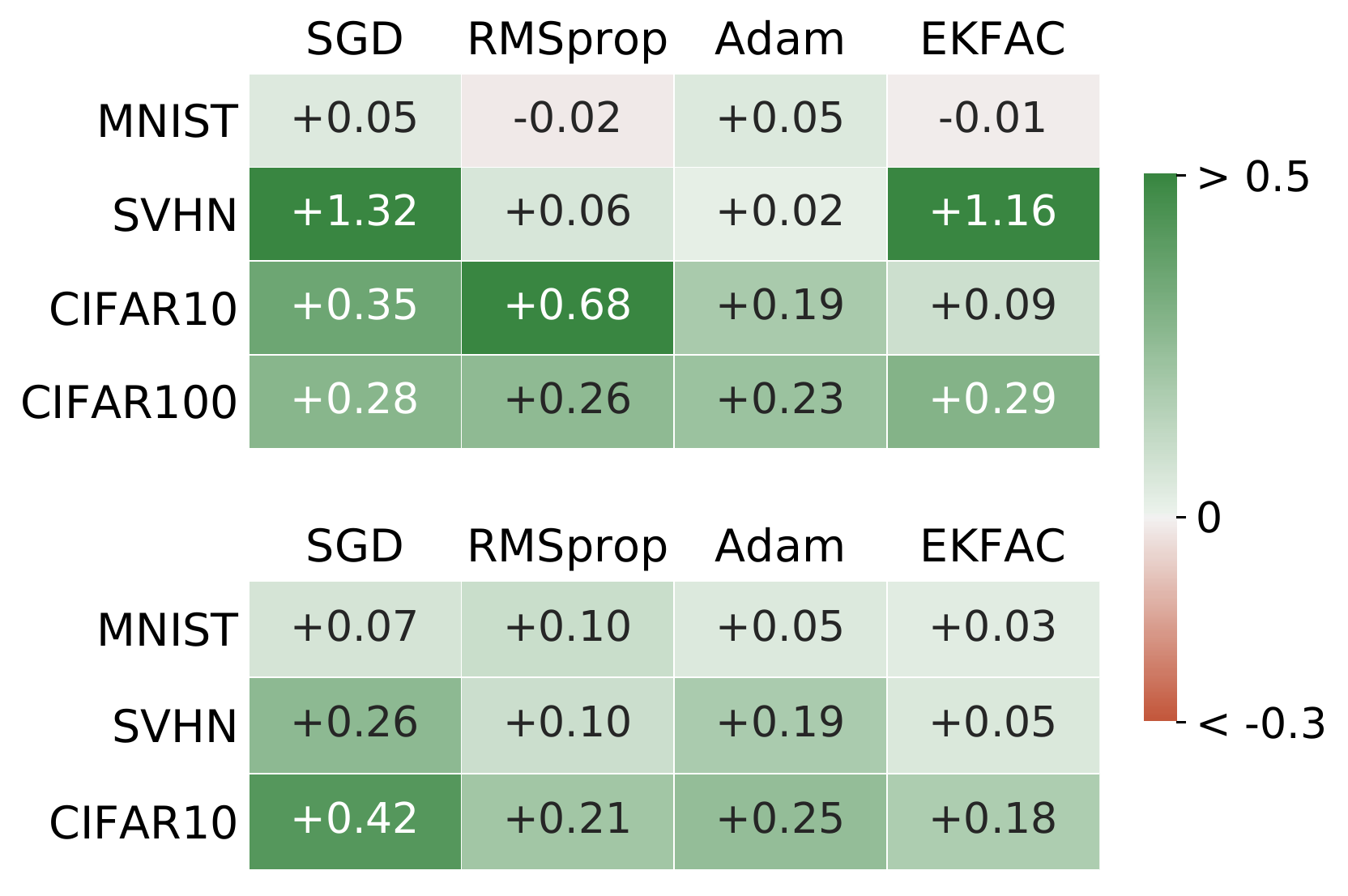}
        \vskip -0.15in
        \caption{
        Accuracy (\%) improvement (\markgreen{+}) or degradation (\markred{-}) when richer information structure,
        \ie $\etaOL {\rightarrow} \{\etaCL,\etaCCG\}$,
        is used for each best-tuned baseline\protect\footnotemark $\text{}$ in Table~\ref{table:clf} (upper) and Table~\ref{table:clf2} (bottom). Color bar is scaled for best view.
        }
        \label{fig:abl-als}
    \end{minipage}
\vskip -0.05in
\end{figure*}
\egroup

\subsection{Algorithmic Connection} \label{sec:alg-connect}

Finally, let us discuss an intriguing algorithmic equivalence.
Recall the {subset} relation among the information structures in (\ref{eq:subset-relation}).
Manipulating these structures allows one to traverse between
different game optimality principles.
For instance, masking $\pi_{t,\neg n}^*$ in $\etaCCG$
makes it degenerate to $\etaCL$,
which implies the FNE and GR optimality become equivalent.
Through this lens,
one may wonder if a similar algorithmic relation can be drawn for these iterative updates.
This is indeed the case as shown below (proof left in Appendix~\ref{sec:C3}).
\begin{theorem}[Algorithmic equivalence] \label{thm:alg-connection}
	$\text{ }$
	\begin{itemize}
	\item  (\ref{eq:gr-update}) with $\Puvt\coloneqq\mathbf{0}$ gives (\ref{eq:fne-update})
	\item  (\ref{eq:fne-update})  with $(\Quxtn,\Quutn)\coloneqq(\mathbf{0},\mI)$
	 gives SGD
	\item  (\ref{eq:gr-update}) with $(\Puvt,\Puxt,\Puut)\coloneqq(\mathbf{0},\mathbf{0},\mI)$ gives SGD
	\end{itemize}
	Setting $\Quutn$ and $\Puut$ to other precondition matrices, similar to (\ref{eq:q-approx}), recovers other baselines.
\end{theorem}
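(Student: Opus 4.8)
The plan is to verify each bullet by a direct substitution into the closed-form updates (\ref{eq:fne-update}) and (\ref{eq:gr-update}), leaning on Proposition~\ref{prop:nash-open-loop} to identify what the open gain reduces to once the feedback term is switched off.

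For the first bullet I would set $\Puvt:=\mathbf{0}$ in (\ref{eq:gr-update}). The cooperative precondition collapses, $\PuuCt=\Puut-\Puvt\PvvInvt\Pvut=\Puut$, hence $\kut=\Puut^\Inv(\Put-\Puvt\It)=\Puut^\Inv\Put$ and $\Kut=\Puut^\Inv(\Puxt-\Puvt\Lt)=\Puut^\Inv\Puxt$, which are exactly the gains $\ktn,\Ktn$ of (\ref{eq:fne-update}) under the relabelling $\vu\equiv\theta_{t,1}$: with no $\vu$--$\vv$ coupling in the stage Hessian, the $\vu$-block of the group-rational objective $P_t$ coincides with the Isaacs--Bellman objective $Q_{t,1}$, so $\Put\to Q^{t,1}_\theta$, $\Puut\to Q^{t,1}_{\theta\theta}$, $\Puxt\to Q^{t,1}_{\theta\vx}$. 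I would also check that under $\Puvt\equiv\mathbf{0}$ the joint backward recursion (\ref{eq:gr-vxx}) keeps $W_t=\sum_n V_{t,n}$, so that its $\vx$-derivatives reduce to the per-player ones of (\ref{eq:fne-vxx}) and the relabelling stays consistent across the whole sweep; this decoupling check is the one step that is not purely mechanical.

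For the second bullet I would set $\Quxtn:=\mathbf{0}$ in (\ref{eq:fne-update}). Since $Q^{t,n}_{\vx\theta}=(\Quxtn)^\transpose=\mathbf{0}$, the feedback gain is $\Ktn=\mathbf{0}$ and the correction term drops from (\ref{eq:fne-vxx}); because $\ell_{t,n}$ and $\phi_n$ carry no $\vx_t$-dependence, what remains is $V^{t,n}_\vx=\FxtT V^{t+1,n}_\vx$ with $V^{T,n}_\vx=\phi_\vx^n$, i.e.\ exactly the adjoint recursion $\vp_{t,n}=H^{t,n}_\vx$, $\vp_{T,n}=\phi_\vx^n$ from the OLNE definition. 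Hence $\Vxtn\equiv\vp_{t+1,n}$, and therefore $\Qutn=\lutn+\FutT\Vxtn=\Hutn$, which by Proposition~\ref{prop:nash-open-loop} equals $\nabla_{\theta_{t,n}}L$. Taking $\Quutn:=\mI$ then gives $\delta\pi^*_{t,n}=\ktn=\Qutn$ with no $\dvx_t$-dependence, so the feedback forward pass of Algorithm~\ref{alg:dgnopt} degenerates to simply applying $\theta_{t,n}\leftarrow\theta_{t,n}-\nabla_{\theta_{t,n}}L$, i.e.\ plain SGD.

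The third bullet is the composition of the first two: $\Puvt:=\mathbf{0}$ turns (\ref{eq:gr-update}) into (\ref{eq:fne-update}) with $(\Puxt,\Puut)$ in the role of $(\Quxtn,\Quutn)$, and then $\Puxt:=\mathbf{0}$, $\Puut:=\mI$ is precisely the second bullet; substituting directly, $\PuuCt=\mI$, $\Kut=\mathbf{0}$, $\kut=\Put=\Hutn$. For the closing remark, I would keep the feedback off exactly as above but replace $\mI$ by $\Quutn$ (resp.\ $\Puut$) set to $\Hutn\Hutn^\transpose$ or $\diag(\sqrt{\Hutn\odot\Hutn})$, as in (\ref{eq:q-approx}): the update becomes $\mM^\Inv\Hutn$, which is verbatim the Gauss--Newton and RMSProp/Adam update already identified in Proposition~\ref{prop:nash-open-loop}, so no new argument is needed. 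I expect the decoupling check in the first bullet to be the only real obstacle; everything else is substitution plus one appeal to Proposition~\ref{prop:nash-open-loop}.
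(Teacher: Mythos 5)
Your proposal is correct and follows essentially the same route as the paper: the ``decoupling check'' you flag for the first bullet is exactly the paper's Lemma~\ref{lemma9}, a backward induction establishing $W_t=\sum_n V_{t,n}$ together with the matching of the derivative blocks $(\Put,\Puut,\Puxt)$ with $(Q^{t,1}_{\theta},Q^{t,1}_{\theta\theta},Q^{t,1}_{\theta\vx})$ when $\Puvt\coloneqq\mathbf{0}$, and your co-state identification $V^{t,n}_{\vx}=\vp_{t,n}$, $\Qutn=H^{t,n}_{\theta}$ for the second bullet is exactly the paper's Lemma~\ref{lemma10}. The third bullet and the closing remark are handled identically in the paper, by composing the two lemmas and appealing to Proposition~\ref{prop:nash-open-loop}.
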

The intuition behind Theorem \ref{thm:alg-connection} is that when higher-order (${>}2$) expansions are discarded,
setting $\Puvt\coloneqq\mathbf{0}$ completely blocks the communication between two players;
therefore we effectively remove $\pi_{t,\neg n}^*$ from $\etaCCG$.
Similarly, forcing $\Quxtn \coloneqq \mathbf{0}$ prevents Player $n$ from observing
how changing $\vx_t$ may affect the payoff, hence one can at best achieve the same OLNE optimality as baselines.
Theorem \ref{thm:alg-connection} implies that (\ref{eq:fne-update}) and (\ref{eq:gr-update}) generalize standard updates
to richer information structure; thereby creating more complex updates.

\section{Experiment} \label{sec:experiment}

\subsection{Evaluation on Classification Datasets} \label{sec:5.1}
\textbf{Datasets and networks.}
We verify the performance of DGNOpt on image classification datasets as they are suitable testbeds for modern networks that contain
non-Markovian dependencies.
Specifically,
we first consider residual-based networks
given their popularity and our thorough discussions in $\S$\ref{sec:4}.
For larger datasets such as CIFAR10/100, we train ResNet18 with multi-stepsize learning rate decay.
For MNIST and SVHN, we use residual networks composed of 3 residual blocks (see Fig.~\ref{fig:dy-net}).
Meanwhile,
we also consider inception-based networks, which composed of an inception block (see Fig.~\ref{fig:inception}) that resembles a 4-player dynamic game.
All networks use ReLU activation and are trained with 128 batch size.
Other setups are detailed in Appendix~\ref{sec:app5}.

\begin{figure}[t]
  \vskip -0.1in
  \begin{center}
    \includegraphics[width=0.58\linewidth]{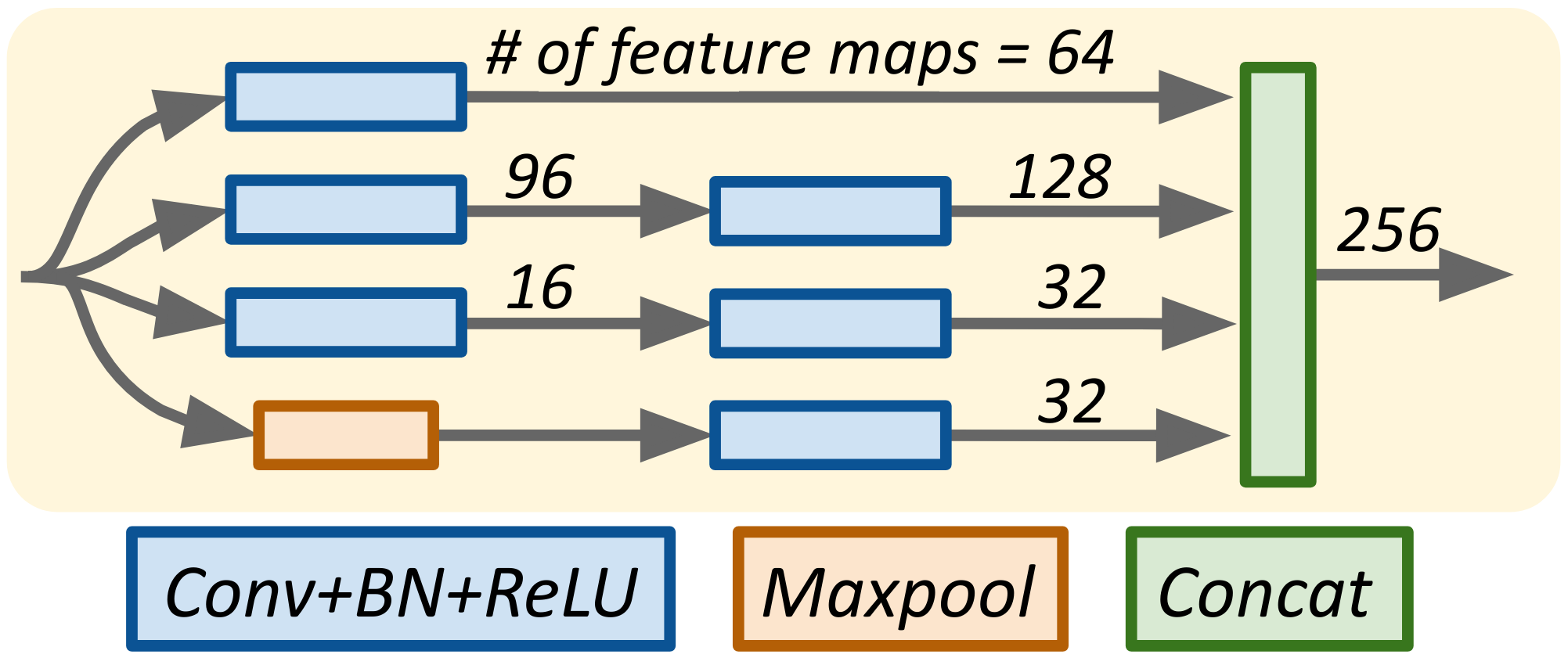}
  \end{center}
  \vskip -0.2in
  \caption{Architecture of the inception block.}
  \label{fig:inception}
  \vskip -0.15in
\end{figure}

\textbf{Baselines.}
Motivated by our discussion in $\S$\ref{sec:3},
we compare DGNOpt, which essentially solves FNE and GR, %
with methods involving OLNE either implicitly or explicitly.
This includes standard training methods such as
SGD,
RMSprop, Adam, and EKFAC \citep{george2018fast}, which is an extension to the second-order method KFAC
with eigenvalue-correction.
To also compare against OCT-inspired methods,
we {include} EMSA \citep{li2017maximum}, which {explicitly} minimizes a {modified} Hamiltonian.
Other OCT-based training methods mostly consider degenerate, \eg discrete-weighted \cite{li2018optimal} or Markovian \cite{liu2021differential}, networks.
In this view, DGNOpt generalizes those methods to both larger network class and richer information structure.

\textbf{Performance and ablation study.}
Table \ref{table:clf} and \ref{table:clf2} summarize the performance for the residual and inception networks.
On most datasets,
DGNOpt achieves competitive results against standard methods and outperforms EMSA by a large margin.
Despite both originates from the OCT methodology,
{in practice EMSA often exhibits numerical instability for larger networks}. %
On the contrary,
DGNOpt leverages iteration-based linearization and amortized curvature, which
greatly stabilizes the training.

On the other hand,
DGNOpt distinguishes itself from standard baselines by
considering a larger information structure.
To validate the benefit of having this additional knowledge during training,
we conduct an ablation study using the algorithmic connection built in Theorem \ref{thm:alg-connection}.
Specifically, we measure the performance difference when the best-tuned baselines, \ie the ones we report in Table \ref{table:clf} and \ref{table:clf2}, are further allowed to utilize higher-level information.
Algorithmically, this can be done by running DGNOpt with the parameter curvature replaced
by the precondition matrix of each baseline.
For instance, replacing all $Q_{\theta\theta}^{t,n}$ with identity matrices~$\mI$ while keeping other computation unchanged
is {equivalent} to lifting SGD to accept the closed-loop structure $\etaCL$.
From Theorem~\ref{thm:alg-connection},
these two training procedures now differ only in the presence of $Q_{\theta\vx}^{t,n}$,
which allows SGD to adjust its update based on the change of $\vx_t\in\etaCL$.
As shown in Fig.~\ref{fig:abl-als},
enlarging the information structure tends to enhance the performance,
or at least being innocuous.
We highlight these improvements as the benefit gained from introducing dynamic game theory
to the original OCT interpretation.

\footnotetext{
    The ablation analysis in Fig.~\ref{fig:abl-als} applies Theorem~\ref{thm:alg-connection}
    to methods that solve the exact Hamiltonian;
    hence excludes EMSA since it instead considers a modified Hamiltonian (see Appendix~\ref{sec:app5}).
}

\textbf{Overhead vs performance trade-offs.}
As shown in Fig.~\ref{fig:runtime},
DGNOpt enjoys a comparable runtime and memory complexity to standard methods on training ResNet18.
Specifically, its per-iteration runtime is around $\pm$40\% compared to the second-order baseline,
depending on the information structures (DGNOpt-FNE \emph{v.s.} DGNOpt-GR).
In practice, these gaps tend to vanish for smaller networks. %
The overhead introduced by DGNOpt enables the computation of \emph{feedback updates} using a richer information structure.
From the OCT standpoint,
the feedback is known to play a key role in compensating the unstable disturbance along the propagation.
Particularly,
when problems inherit chained constraints (\eg DNNs),
these feedback-enhanced methods often converge faster with superior numerical stability against standard methods \citep{murray1984differential}.

To validate the role of feedback in training modern DNNs, notice that
one shall expect the effect of feedback becomes significant
when a larger step size is taken.
This is because (see (\ref{eq:dx})) larger $\delta \pi^*$ increases $\delta \vx_t$, which amplifies the
feedback $\mathbf{K}\delta\vx_t$.
Fig.~\ref{fig:large-lr} confirms our hypothesis,
where we train the inception-based networks on MNIST and SVHN using relatively large learning rates.
It is clear that utilizing feedback updates greatly stabilizes the training.
While the SGD baseline struggles to make stable progress, DGNOpt converges almost flawlessly (with negligible overhead).
As for well-tuned hyperparameter which often has a smaller step size, our ablation analysis in Fig.~\ref{fig:abl-als}
suggests that having feedbacks throughout the stochastic training generally leads to better local minima.

\begin{figure}[t]
    \vskip 0.05in
    \centering
    \begin{minipage}{0.48\textwidth}
        \begin{center}
        \includegraphics[height=2.9cm]{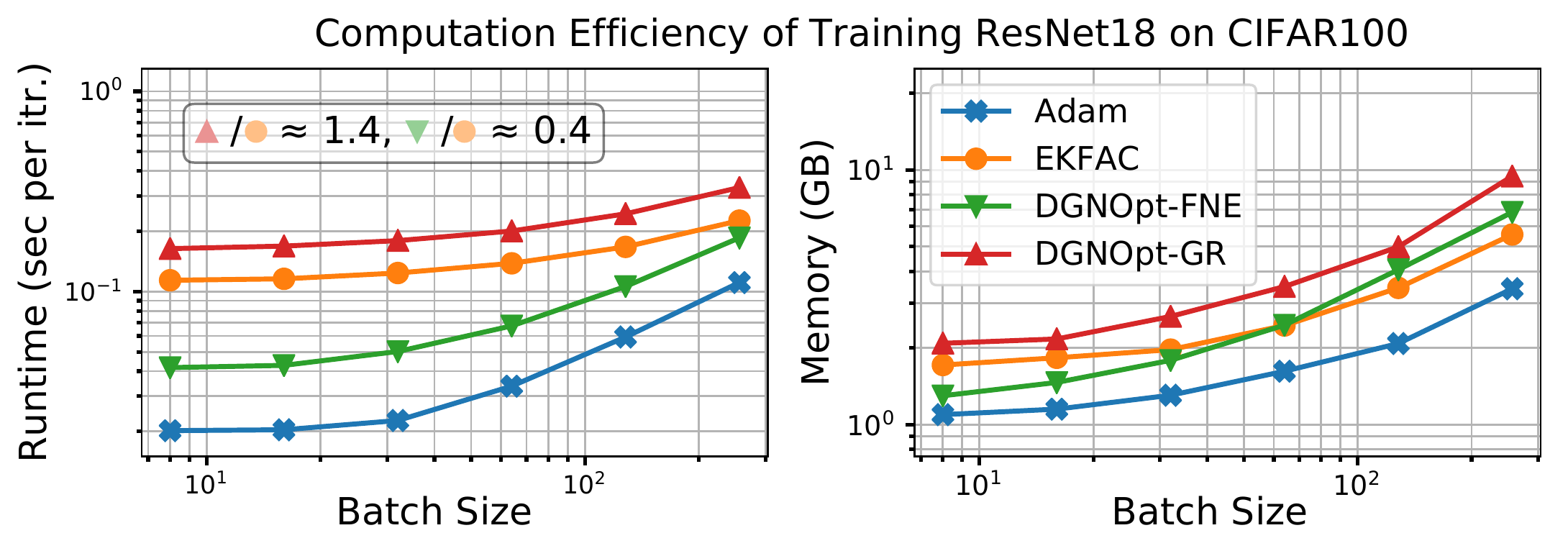}
        \vskip -0.15in
        \caption{
            Our second-order method {DGNOpt} exhibits similar runtime ($\pm$40\%) and memory ($\pm$30\%) complexity compared to the second-order baseline EKFAC.
            }
        \label{fig:runtime}
        \end{center}
    \end{minipage}
    \vskip 0.1in
    \begin{minipage}{0.48\textwidth}
        \begin{center}
        \includegraphics[width=\columnwidth]{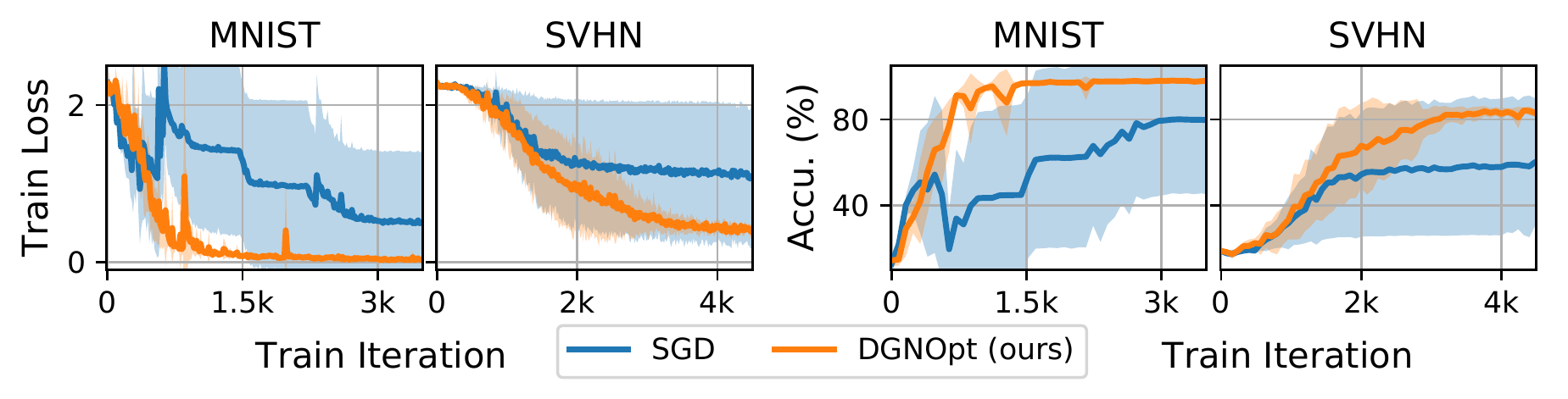}
        \vskip -0.15in
        \caption{
            Training inception-based networks using larger step sizes, where MNIST (\resp SVHN) uses
            $\mathrm{lr}{=}$1.0 (\resp $\mathrm{lr}{=}$2.0).
            }
        \label{fig:large-lr}
        \end{center}
    \end{minipage}
\vskip -0.05in
\end{figure}

\subsection{Game-Theoretic Applications} \label{sec:5-2}

\textbf{Cooperative training with fictitious agents.}
Despite all the rigorous connection we have explored so far,
it is perhaps unsatisfactory to see
our multi-agent {analysis} %
degenerates when facing
feedforward networks, since the number of player $N$ becomes trivially 1. %
We can remedy this scenario by considering the following transformation.
\begin{align}
    F_t(\vz_t,\theta_{t,1}, {\cdots},\theta_{t,N}) \coloneqq f_t(\vz_t,\theta_t), \text{ } \textstyle\sum_{n=1}^N \theta_{t,n} {=} \theta_t
    \label{eq:robust-traj}
\end{align}
In other words, we can divide the layer's weight (or player's action) into multiple parts,
so that the MPDG framework remains applicable.
Interestingly, the transformation of this kind resembles game-theoretic robust optimal control \cite{pan2015robust,sun2018min},
where the controller (or player in our context)
models external disturbances with \emph{fictitious agents},
in order
to enhance the robustness or convergence of the optimization process.

Fig.~\ref{fig:multi-player} and Table~\ref{table:multi-player} provide
the training results when SGD presumes different numbers of players interacting in a {feedforward} network
consisting of 4 convolution and 2 fully-connected layers (see Appendix~\ref{sec:app5}).
Notice that $N{=}1$ corresponds to the original method.
For $N{>}1$, we apply the transformation (\ref{eq:robust-traj}) then solve for the cooperative update as in DGNOpt.
We stress that these fictitious agents only appear during the training phase for computing the cooperative updates.
At inference, actions from all players collapse back to $\theta_t$ by the summation in (\ref{eq:robust-traj}).

While it is clear that
encouraging agents to cooperate during training
can achieve better minima at a faster rate,
having more agents, surprisingly, does not always imply better performance.
In practice, the improvement can slow down or even degrade once $N$ passes some critical values.
This implies that $N$ shall be treated as a {hyper-parameter} of {these game-extended methods}.
Empirically, we find that $N{=}2$ provides a good trade-off between the final performance and convergence speed.
We observe a consistent result for this setup on other optimizers (see {Appendix \ref{sec:app5}} for EKFAC).

\begin{figure}[t]
    \vskip -0.05in
    \centering
    \begin{minipage}{0.485\textwidth}
        \centering
        \captionsetup{type=table}
        \captionsetup{justification=centering}
        \caption{
            Convergence speed w.r.t. $N$.
            Numerical values report the training steps required to achieve certain accuracy on each dataset.
        }
        \label{table:multi-player}
        \vskip 0.1in
        \begin{small}
        \begin{tabular}{l!{\vrule width 0.7pt}cccc}
        \toprule
        Achieved & \multicolumn{4}{c}{Number of Player ($N$) \textnormal{in} SGD} \\
        Performance & $1$ & $2$ & $4$ & $6$ \\
        \midrule %
        80\% \textnormal{in} SVHN   & 5.14k & 2.31k & 1.25k & \textbf{0.8k} \\
        60\% \textnormal{in} CIFAR10 & 3.62k & \textbf{2.97k} & \textbf{2.98k} & 5.83k \\
        \bottomrule
        \end{tabular}
        \end{small}
    \end{minipage}
    \vskip 0.1in
    \begin{minipage}{0.485\textwidth}
        \begin{center}
        \subfloat{
        \includegraphics[height=2.85cm]{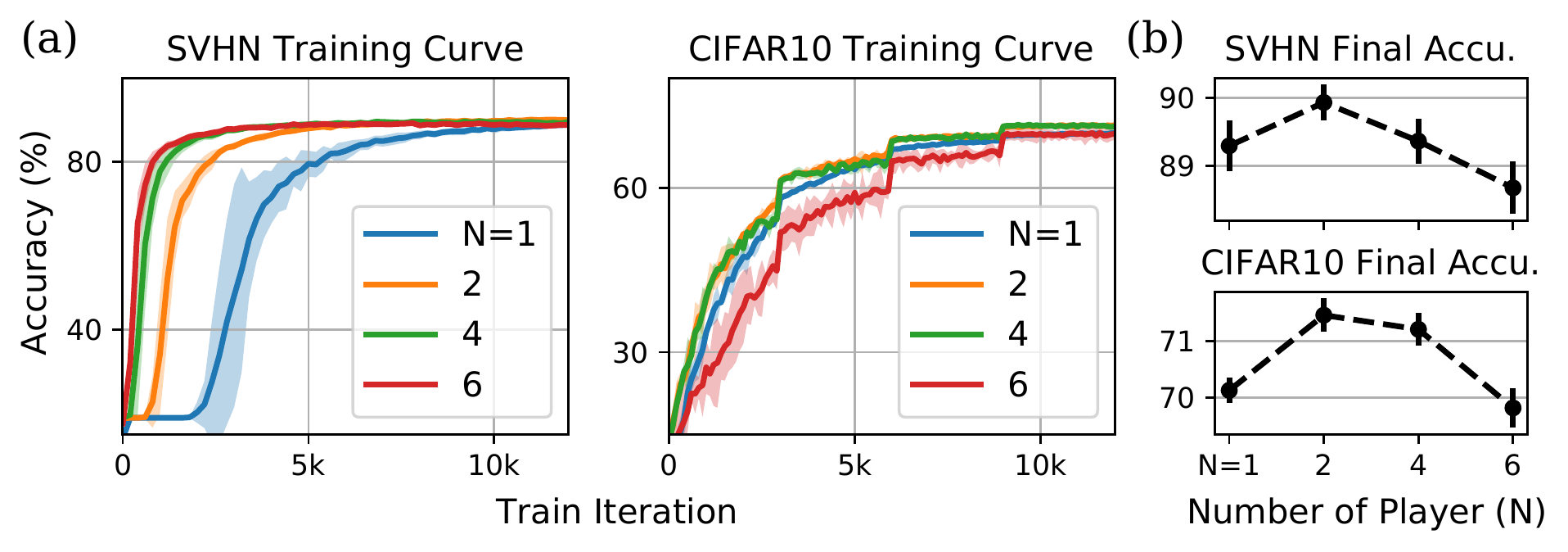}
        \label{fig:multi-player-a}%
        }
        \subfloat{
        \textcolor{white}{\rule{1pt}{1pt}}
        \label{fig:multi-player-b}%
        }
        \vskip -0.1in
        \caption{
            (a) Training curve and (b) final accuracy as we vary the number of player ($N$)
            as a hyperparameter of game-extended SGD.
        }
        \label{fig:multi-player}
        \end{center}
    \end{minipage}
\vskip -0.11in
\end{figure}

\textbf{Adaptive alignment using multi-armed bandit.}
Finally, let us discuss an application of the bandit algorithm in our framework.
In $\S$\ref{sec:dnn-dg}, we briefly mentioned that mapping from modern networks to the shared dynamics $F_t$ most likely will not be unique.
For instance (see Fig.~\ref{fig:bandit-a}), placing the shortcut module of a residual block at different locations leads to different $F_t$;
hence results in different DGNOpt updates.
This is a distinct feature arising exclusively from our MPDG framework,
since these alignments are unrecognizable to standard baselines.
It naturally raises the following questions:
what is the optimal strategy to align the (p)layers of the network in our dynamic game,
and how do different aligning strategies affect training?

To answer these questions,
we compare the performance between three strategies,
including
\emph{(i)} using a fixed alignment throughout training,
\emph{(ii)} random alignment at each iteration, and
\emph{(iii)} adaptive alignment using a multi-armed bandit.
For the last case,
we interpret pulling an arm as selecting one of the alignments
and associate the round-wise reward with the validation accuracy at each iteration.
Note that this is a \emph{non-stationary} bandit problem since the reward distribution of each arm/alignment
evolves as we train the network.
We provide the pseudo-code of this procedure in {Appendix \ref{sec:app5}}.

Fig.~\ref{fig:bandit-b} and Table~\ref{table:bandit} report the results of DGNOpt using different aligning strategies.
We also include the baseline when the information structure shrinks from $\etaCCG$ to
$\etaOL$,
similar to the ablation study in $\S$\ref{sec:5.1}.
In this case, all these DGNOpt variants degenerate to EKFAC.
For the non-stationary bandit, we find EXP3++ \cite{seldin2014one} to be sufficient in this application.
While DGNOpt with fixed alignment already achieves faster convergence compared with the baseline,
dynamic alignment using either random or adaptive strategy leads to further improvement (see Fig.~\ref{fig:bandit-b}).
Notably, having the adaptation throughout training also enhances the final accuracy.
For CIFAR10 with ResNet18, the value is boost by 1\% from baseline and 0.5\% compared with the other two strategies.
This sheds light on new algorithmic opportunities inspired by \emph{architecture-aware} optimization.

\begin{figure}[t]
    \vskip -0.05in
    \centering
    \begin{minipage}{0.485\textwidth}
        \centering
        \captionsetup{type=table}
        \captionsetup{justification=centering}
        \caption{Training result (accuracy \%) of Fig.~\ref{fig:bandit} on two datasets.}
        \label{table:bandit}
        \vskip 0.1in
        \begin{small}
        \begin{tabular}{l!{\vrule width 0.7pt}cccc}
        \toprule
        \multirow{2}{*}{Dataset} & \multirow{2}{*}{EKFAC} & \multicolumn{3}{c}{DGNOpt + Aligning Strategy} \\ %
         &  & fixed & random & adaptive \\
        \midrule
        SVHN    & 87.49 & 88.20 & 88.12 & \textbf{88.33} \\
        CIFAR10 & 84.67 & 85.20 & 85.27 & \textbf{85.65} \\
        \bottomrule
        \end{tabular}
        \end{small}
    \end{minipage}
    \vskip 0.1in
    \begin{minipage}{0.485\textwidth}
        \begin{center}
            \subfloat{
            \centering
            \includegraphics[height=2.8cm]{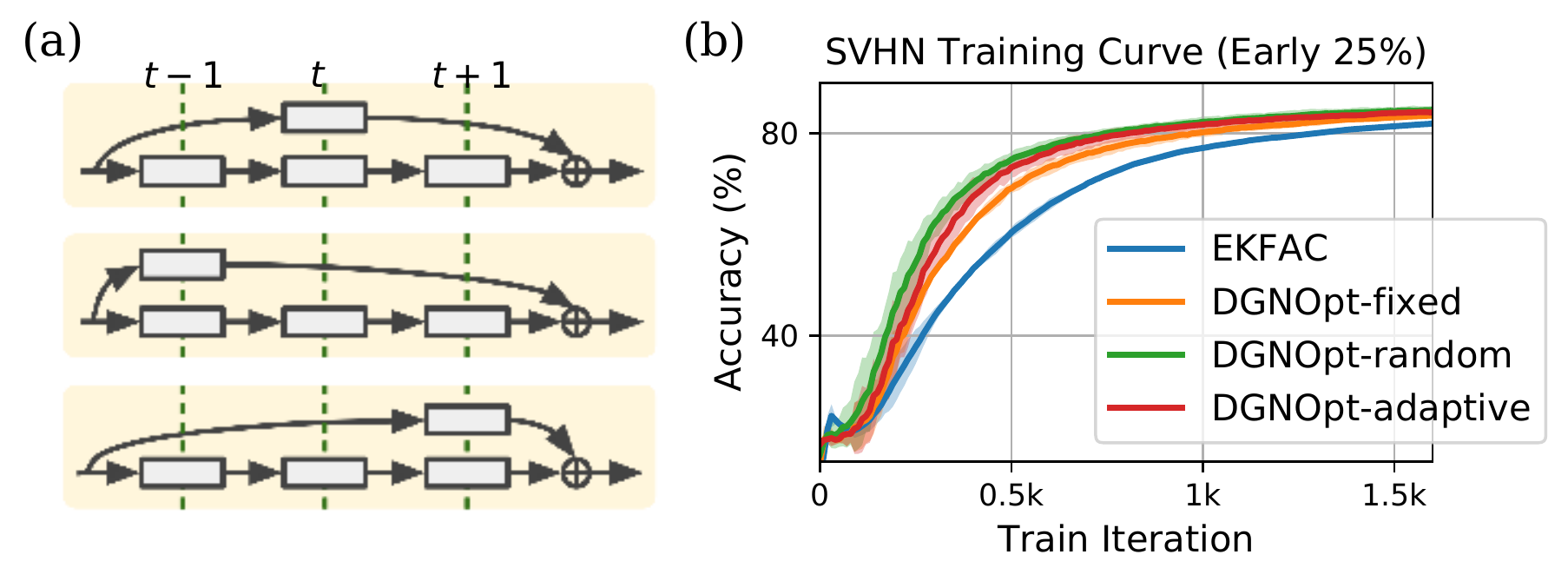}
            \label{fig:bandit-a}%
            }
            \subfloat{
            \textcolor{white}{\rule{1pt}{1pt}}
            \label{fig:bandit-b}%
            }
            \vskip -0.1in
            \caption{
                (a) Different alignments of the same residual block
                leads to distinct DGNOpt updates, yet they are unrecognizable to baselines.
                (b) Early phase of training with different aligning strategies.
            }
            \label{fig:bandit}
            \end{center}
    \end{minipage}
\vskip -0.15in
\end{figure}

\section{Discussion} \label{sec:discussion}

\newcommand{\marklink}[1]{{\color{red} #1}}

\textbf{Comparison to Markovian-based OCT-inspired methods.}
As we briefly mentioned in $\S$\ref{sec:3.2}, our DGNOpt (with FNE) can be seen as a game-theoretic extension of \citet{liu2021differential},
which is also an OCT-inspired method despite concerning only Markovian networks.
It is natural to wonder whether these two methods are interchangeable
since one can always force a non-Markovian system to be Markovian by lifting it into higher dimensions or aggregating the state.

Here,
we stress that our DGNOpt differs from \citet{liu2021differential} in many significant ways.
For one, forming a Markovian chain by grouping the non-Markovian layers into higher dimensions
leads to a \emph{degenerate} information structure.
The (p)layers inside each Markovian group, $\{{f}_{t,n}: {t}^- {<} t {<} {t}^+ \}$, only have access to
$\vx_{s\le{t^-}}$ rather than full latest information $\vx_{s\le t}$ as in DGNOpt,
since their dependencies are discarded.
From the Nash standpoint,
this leads to degenerate backward optimality and update rules.
Indeed, in the limit when we simply group the whole network as single-step dynamics, we will recover $\etaOL {\triangleq} \{\vx_{0}\}$ in baselines.
In contrast, DGNOpt
fully leverages the structural relation of the network,
hence enables rich game-based applications, \eg bandit or robust control, that are otherwise infeasible with \citet{liu2021differential}.

\begingroup
\setlength{\columnsep}{10pt}%
\textbf{Degeneracy when partitioning parameters as players.}
In $\S$\ref{sec:5-2}, we demonstrate a specific transformation, \ie (\ref{eq:robust-traj}),
that makes cooperative training possible for single-player feedforward networks
while respecting our \emph{layer-as-player} game formulation.
This transformation may seem artificial at first glance compared to a naive alternative that directly partitions the parameters of each layer as distinct players.
Unfortunately, the latter strategy yields \emph{degenerate} cooperative optimality.
To see it, notice that treating the $\pi_{t,n}$ appeared in the joint optimization~(\ref{eq:gr-bellman}) as the $n^\text{th}$-partitioned parameters of layer~$t$
is \emph{equivalent} to solving the FNE optimality~(\ref{eq:i-bellman}) with $N{=}1$ (so that the $\pi_{t,N{=}1}$ in (\ref{eq:i-bellman}) becomes the intact parameters of layer $t$).
Hence, it collapses to the prior single-player non-cooperative method \citep{liu2021differential},
\begin{wrapfigure}[8]{r}{0.26\textwidth}
  \vspace{-8pt}
  \centering\includegraphics[width=\linewidth]{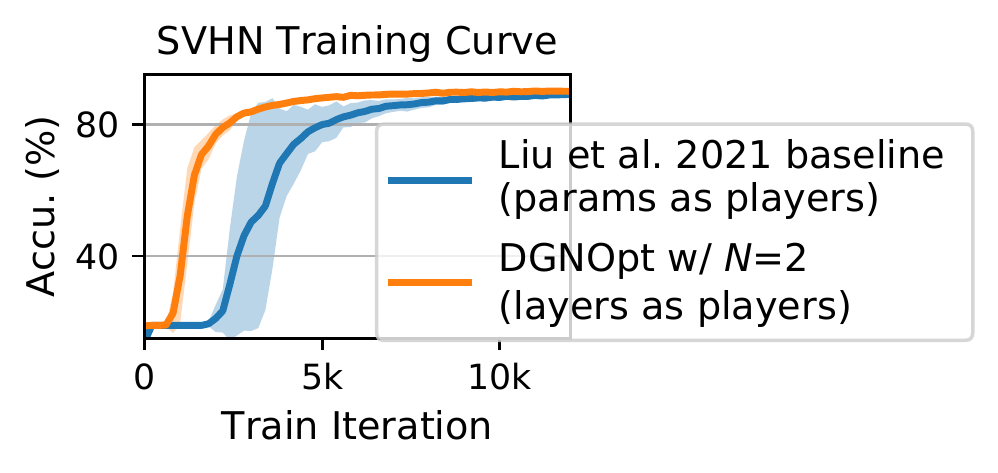}
  \vskip -0.2in
  \caption{
    Convergence using different dynamic game formulations with the same setup as Fig.~\ref{fig:multi-player-a}.
  }
  \label{fig:players-comp}
\end{wrapfigure}
which converges much slower than our DGNOpt (Fig.~\ref{fig:players-comp}).
Our proposed transformation (\ref{eq:robust-traj}) enables collaborative control
and may be naturally extended to other robust formulations, \eg minimax adversarial training.

\endgroup

\section{Conclusion} \label{sec:conclusion}

In this work, we introduce a novel game-theoretic characterization
by bridging the training process of DNN with a multi-agent dynamic game.
The inspired optimizer, DGNOpt,
generalizes previous OCT-based methods to generic network class and encourages cooperative updates to improve the performance.
Our work pushes forward principled algorithmic design from OCT and game theory.

\vspace{-0.5mm}
\section*{Acknowledgements}
\vspace{-0.5mm}
G.H. Liu was supported by CPS NSF Award \#1932068, and
T. Chen was supported by ARO Award \#W911NF2010151.
The authors thank C.H. Lin, Y. Pan, C.W. Kuo, M. Gandhi, E. Evans, and Z. Wang for many helpful discussions.

\newpage
\clearpage

\bibliography{reference}
\bibliographystyle{icml2021}

\appendix
\newpage
\onecolumn
\begin{center}
\textbf{{\huge Supplementary Material}}
\end{center}

\section{Notation Summary} \label{sec:notation}

\bgroup
\def\arraystretch{1.1}%

\begin{table}[H]
\vskip -0.1in
\caption{Abbreviation.}
\vskip 0.1in
\begin{center}
    \begin{small}
    \begin{tabular}{ll}
    \toprule
    OCT/OCP    & Optimal Control Theory/Programming \\ [1pt]
    MPDG & Multi-Player Dynamic Game \\ [1pt]
    CG & Cooperative Game \\ [1pt]
    OLNE   & Open-loop Nash Equilibria   \\ [1pt]
    FNE   & Feedback Nash Equilibria   \\ [1pt]
    GR   & Group Rationality   \\ [1pt]
    IR   & Individual Rationality   \\ [1pt]
    \bottomrule
    \end{tabular}
    \end{small}
\end{center}
\vskip -0.1in
\end{table}

\def\arraystretch{1.3}
\begin{table}[H]
\vskip -0.1in
\caption{Terminology mapping.}
\vskip 0.1in
\begin{center}
    \begin{small}
    \begin{tabular}{lll}
    \toprule
    \multicolumn{1}{c}{} & \textbf{MPDG} & $\text{ }\text{ }$ \textbf{Training generic (non-Markovian) DNNs} \\
    \midrule
    \specialcelll[l]{$t$\\$n$} & \specialcelll[l]{Stage order\\Player index} &
    $\left.\begin{array}{l} \text{Computation order from input to output}\\\text{Index of parallel layers aligned at $t$} \end{array}\right\}$ Layer index $(t,n)$ \\
    $f_{t,n}$ & $\qquad$ - & $\text{ }\text{ }$ Layer module indexed by $(t,n)$ \\
    $F_t$     & Shared dynamics & $\text{ }\text{ }$ Joint propagation rule of layers $\{f_{t,n}: n\in[N] \}$ \\
    $\theta_{t,n}$ & {Action committed at stage $t$ by Player $n$} & $\text{ }\text{ }$ Trainable parameter of layer $f_{t,n}$ \\
    $\vz_{t,n}$      & $\qquad$ -  & $\text{ }\text{ }$ Pre-activation vector of layer $f_{t,n}$ \\
    $\vx_{t}$      & State at stage $t$  & $\text{ }\text{ }$ Augmentation of pre-activation vectors of layers $\{f_{t,n}: n\in[N] \}$ \\
    $\ell_{t,n}$   & {Cost incurred at stage $t$ for Player $n$}  & $\text{ }\text{ }$ Weight decay for layer $f_{t,n}$ \\
    $\phi_{n}$     & {Cost incurred at final stage $T$ for Player $n$} & $\text{ }\text{ }$ {Lost w.r.t. network output (\eg cross entropy in classification)} \\
    \bottomrule
    \end{tabular}
    \end{small}
\end{center}
\vskip -0.1in
\end{table}

\begin{table}[H]
\vskip -0.1in
\caption{Dynamic game theoretic terminology w.r.t. different optimality principles.}
\vskip 0.1in
\begin{center}
    \begin{small}
    \begin{tabular}{c!{\vrule width 0.7pt}ll}
    \toprule
    \multirow{3}{*}{\textbf{OLNE}}
    & $\etaOL$     & Open-loop information structure \\
    & $H_{t,n}$    & Optimality objective (Hamiltonian) for OLNE \\
    & $\vp_{t,n}$  & Co-state at stage $t$ for Player $n$ \\
    \midrule
    \multirow{5}{*}{\textbf{FNE}}
    & $\etaCL$   & Feedback information structure \\
    & $Q_{t,n}$  & Optimality objective (Isaacs-Bellman objective) for FNE \\
    & $V_{t,n}$  & Value function for FNE \\
    & $\ktn$     & Open gain of the locally optimal update for FNE \\
    & $\Ktn$     & Feedback gain of the locally optimal update for FNE \\
    \midrule
    \multirow{6}{*}{\textbf{GR}}
    & $\etaOCG$ & Cooperative open-loop information structure \\
    & $\etaCCG$ & Cooperative feedback information structure \\
    & $P_{t}$   & Optimality objective (group Bellman objective) for GR \\
    & $W_{t}$   & Value function for GR \\
    & $\kut$    & Open gain of the locally optimal update for GR \\
    & $\Kut$    & Feedback gain of the locally optimal update for GR \\
    \bottomrule
    \end{tabular}
    \end{small}
\end{center}
\vskip -0.1in
\end{table}

\egroup

\newpage

\section{OCP Characterization of Training Feedforward Networks} \label{sec:app2}

The optimality principle to OCP (\ref{eq:ocp-obj}), or equivalently the training process of feedforward networks, can be characterized by Dynamic Programming (DP) or Pontryagin Principle (PP). We synthesize the related results below.
\begin{theorem}[\citet{bellman1954theory,pontryagin1962mathematical}] \label{the:pmp-dp} %
  $\text{ }$ \\
  \textbf{(DP)} Define a value function $V_t$ computed recursively by the Bellman equation (\ref{eq:bellman}), starting from $V_T(\vz_T) = \phi(\vz_T)$,
  \begin{equation} \label{eq:bellman}
  \begin{split}
    V_t(\vz_t) &= \min_{\pi_t} {
    \text{ } Q_t(\vz_t,\theta_t), \quad \text{where } \quad
    Q_t(\vz_t,\theta_t) \coloneqq \ell_t(\theta_t) + V_{t+1}(f_t(\vz_t,\theta_t))
    }
  \end{split}
  \end{equation}
  is called the Bellman objective. $\pi_t \equiv \theta_t(\vz_t)$ is an arbitrary mapping from $\vz_t$ to $\theta_t$.
  Let $\pi_t^*$ be the minimizer to (\ref{eq:bellman}),
  then $\{ \pi_t^*: t\in[T] \}$ is the optimal feedback policy to ({\ref{eq:ocp-obj}}).

  \textbf{(PP)} The optimal trajectory
  $\pi_t^* \equiv \theta_t^*(\vz_t^*)$
  along ({\ref{eq:bellman}}) obeys
  \begin{subequations} \label{eq:mf-pmp}
    \begin{alignat}{2}
    {\vz}_{t+1}^{*}&= \nabla_{\vp_{t+1}} H_t\left(\vz_{t}^{*}, \vp_{t+1}^{*}, \theta_{t}^{*}\right),\quad
    \vz_{0}^{*}&&=\vz_{0},
    \label{eq:pmp-forward} \\
    {\vp}_{t}^{*}&=\nabla_{\vz_t} H_t\left(\vz_{t}^{*}, \vp_{t+1}^{*}, \theta_{t}^{*}\right), \quad\quad
    \vp_{T}^{*}&&= \nabla_{\vz_T} \phi\left(\vz_{T}^{*}\right),
    \label{eq:pmp-backward} \\
    \theta_{t}^{*} &= \argmin_{\theta_t}
    H_t\left(\vz_{t}^{*}, \vp_{t+1}^{*}, \theta_t \right),
    \label{eq:pmp-max-h}
    \end{alignat}
  \end{subequations}
  where ({\ref{eq:pmp-backward}}) is the {adjoint equation} for the co-state $\vp^*_t$ and
  \begin{align*}
  H_t\left(\vz_t, \vp_{t+1}, \theta_t \right) \coloneqq \ell_t(\theta_t) + \vp_{t+1}^\transpose f_t(\vz_t, \theta_t)
  \end{align*}
  is the discrete-time Hamiltonian.
\end{theorem}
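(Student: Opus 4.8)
The plan is to prove the two assertions separately: part \textbf{(DP)} by backward induction on the stage index, and part \textbf{(PP)} as a consequence of \textbf{(DP)} via an envelope argument.

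For \textbf{(DP)}, I would first introduce the optimal cost-to-go of the tail problem started at an arbitrary state $\vz_t$,
\[
  J_t(\vz_t) \coloneqq \min_{\theta_t,\ldots,\theta_{T-1}} \Big[ \phi(\vz_T) + \textstyle\sum_{s=t}^{T-1} \ell_s(\theta_s) \Big],
\]
subject to the propagation (\ref{eq:layer-prop}) for $s \ge t$, and prove $J_t \equiv V_t$ by backward induction. The base case $J_T = \phi = V_T$ is immediate from $V_T(\vz_T)=\phi(\vz_T)$. For the inductive step the crux is the principle of optimality: since the dynamics (\ref{eq:layer-prop}) are Markovian and the objective in (\ref{eq:ocp-obj}) is additive across stages, the inner minimization over $\theta_{t+1},\ldots,\theta_{T-1}$ feels the choice of $\theta_t$ only through the successor state $\vz_{t+1}=f_t(\vz_t,\theta_t)$. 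This justifies splitting the joint minimum as $J_t(\vz_t) = \min_{\theta_t}\{\ell_t(\theta_t) + J_{t+1}(f_t(\vz_t,\theta_t))\}$; substituting the hypothesis $J_{t+1}=V_{t+1}$ reproduces the Bellman recursion (\ref{eq:bellman}). Finally, evaluating at $t=0$ gives $V_0(\vz_0)=J_0(\vz_0)$, which is precisely the global optimum of (\ref{eq:ocp-obj}), and the stage-wise minimizers $\pi_t^*$ assemble into the claimed optimal feedback policy.

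For \textbf{(PP)}, rather than rederive the conditions from scratch, I would define the co-state along the optimal trajectory as the state-gradient of the value function, $\vp_t^* \coloneqq \nabla_{\vz_t} V_t(\vz_t^*)$, and verify (\ref{eq:pmp-forward})--(\ref{eq:pmp-max-h}) in turn. The forward equation (\ref{eq:pmp-forward}) is just the dynamics, together with $\nabla_{\vp_{t+1}}H_t = f_t$ from the definition of the Hamiltonian; the terminal condition $\vp_T^* = \nabla_{\vz_T}\phi$ follows from $V_T=\phi$. For the adjoint equation (\ref{eq:pmp-backward}), I differentiate the Bellman identity $V_t(\vz_t)=\ell_t(\theta_t^*)+V_{t+1}(f_t(\vz_t,\theta_t^*))$ with respect to $\vz_t$: the envelope theorem discards the term routed through the optimal $\theta_t^*$, leaving $\nabla_{\vz_t}V_t = \nabla_{\vz_t}H_t$ evaluated at co-state $\vp_{t+1}^*=\nabla_{\vz_{t+1}}V_{t+1}$. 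The same first-order stationarity gives $\nabla_{\theta_t}[\ell_t + V_{t+1}\circ f_t] = \nabla_{\theta_t}H_t = 0$, which is (\ref{eq:pmp-max-h}).

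The main obstacle I anticipate is the envelope step in \textbf{(PP)}: it presupposes that $V_{t+1}$ is differentiable and that $\theta_t^*$ is an interior stationary point, so that the implicit dependence $\theta_t^*(\vz_t)$ contributes nothing to first order. I would either assume the requisite smoothness and interiority (standard for this classical result) or, to sidestep differentiating $\theta_t^*$ entirely, fall back on a direct Lagrangian derivation: attach multipliers $\vp_{t+1}$ to the constraints $\vz_{t+1}=f_t(\vz_t,\theta_t)$, write the Lagrangian as $\phi(\vz_T)+\sum_t[H_t - \vp_{t+1}^\transpose \vz_{t+1}]$, and read off (\ref{eq:pmp-forward})--(\ref{eq:pmp-max-h}) as the stationarity conditions in $\vp_{t+1}$, $\vz_t$, and $\theta_t$. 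A secondary subtlety is that (\ref{eq:pmp-max-h}) is in general only a first-order necessary condition; the $\argmin$ becomes a genuine global minimizer exactly when each $H_t$ is convex in $\theta_t$.
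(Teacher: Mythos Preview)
Your proposal is a correct and standard derivation of this classical result. However, the paper does not actually supply its own proof: Theorem~\ref{the:pmp-dp} is stated with attribution to \citet{bellman1954theory,pontryagin1962mathematical} and then only discussed for its implications (the adjoint equation~(\ref{eq:pmp-backward}) recovering Back-propagation, the Bellman objective $Q_t$ relating to DDPNOpt, etc.). There is therefore nothing to compare against at the level of proof strategy.

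For what it is worth, your two-part plan---backward induction on the cost-to-go for \textbf{(DP)}, then identifying $\vp_t^* \coloneqq \nabla_{\vz_t} V_t(\vz_t^*)$ and invoking the envelope theorem for \textbf{(PP)}---is exactly the textbook route, and your caveats about differentiability of $V_{t+1}$, interiority of $\theta_t^*$, and the distinction between a stationary point and a global $\argmin$ in (\ref{eq:pmp-max-h}) are the right ones to flag. The Lagrangian fallback you sketch is also standard and would be the cleaner path if one wished to avoid assuming smoothness of the value function.
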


Theorem \ref{the:pmp-dp} provides an OCP characterization of training feedforward networks.
First, notice that
the time-varying OCP objectives $(Q_t,H_t)$
are constructed through some backward processes similar to the Back-propagation (BP).
Indeed, one can verify that the adjoint equation (\ref{eq:pmp-backward}) gives the {exact} BP dynamics. %
Similarly, the dynamics of $V_t$ in (\ref{eq:bellman}) also relate to BP under some conditions \citep{liu2021differential}.
The parameter update, $\theta_t \leftarrow \theta_t - \delta \theta_t$, for standard training methods can be seen as solving the discrete-time Hamiltonian $H_t$
with different precondition matrices \citep{li2017maximum}.
On the other hand, DDPNOpt \citep{liu2021differential} minimizes the time-dependent Bellman objective $Q_t$ with
$\theta_t \leftarrow \theta_t - \delta \pi_t$.
This elegant connection
is, however, limited to the interpretation between feedforward networks and Markovian dynamical systems (\ref{eq:layer-prop}).

\section{Missing Derivations in Section \ref{sec:3}} \label{sec:app3}

\begin{figure}[H]
\begin{center}
\includegraphics[width=0.85\linewidth]{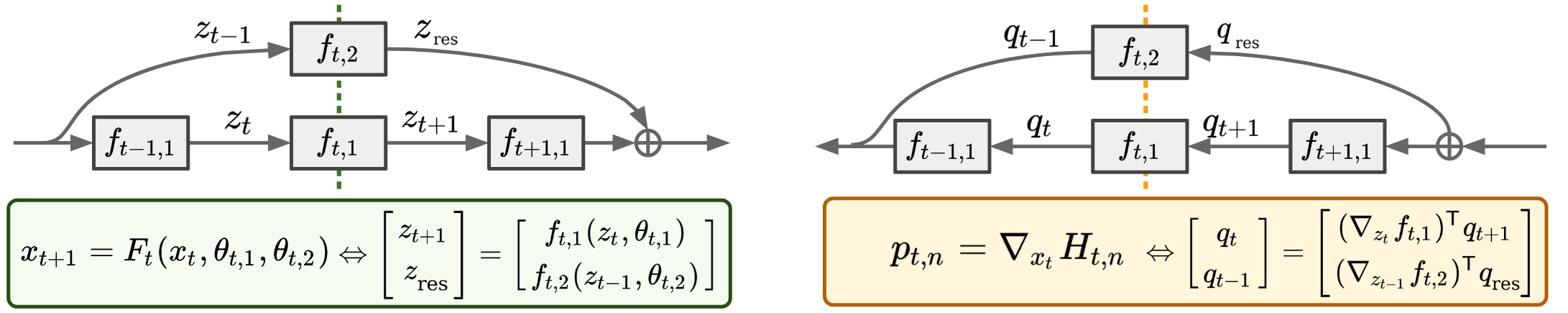}
\caption{ Forward propagation (left) and Back-propagation (right) of a residual block and how each quantity connects to OLNE optimality.
}
\label{fig:app1}
\end{center}
\vskip -0.2in
\end{figure}

\textbf{Proof of Proposition~\ref{prop:nash-open-loop}.}
Expand the expression of the Hamiltonian in OLNE:
\begin{align*}
    H_{t,n}(\vx_t, \vp_{t+1,n}, \theta_{t,1}, \cdots, \theta_{t,N})
    \coloneqq& \text{ } \ell_{t,n}(\theta_{t,1}, \cdots, \theta_{t,N}) + F_t(\vx_{t}, \theta_{t,1}, \cdots, \theta_{t,N})^\transpose \vp_{t+1,n},
\end{align*}
where $\vp_{t,n}$ is the co-state whose dynamics obey
\begin{align*}
    \vp_{t,n} = \nabla_{\vx_t} H_{t,n}, \quad
    \vp_{T,n}  = \nabla_{\vx_T} &\phi_n(\vx_T).
\end{align*}
Recall $\S$\ref{sec:dnn-dg}
where we demonstrate that for training generic DNNs,
one shall consider $\ell_{t,n} \coloneqq \ell_{t,n}(\theta_{t,n})$ and $\phi_n \coloneqq \phi$.
Hence, the dynamics of $\vp_{t,n}$ become
\begin{align}
    \vp_{t,n} = \nabla_{\vx_t} H_{t,n}, \quad
    \vp_{T,n} = \nabla_{\vx_T} &\phi(\vx_T),
    \quad \text{ where } H_{t,n} = \ell_{t,n}(\theta_{t,n}) + F_t(\vx_{t}, \theta_{t,1}, \cdots, \theta_{t,N})^\transpose  \vp_{t+1,n}.
    \label{eq:19}
\end{align}
Our goal is to show that (\ref{eq:19}) gives the exact Back-propagation dynamics.
First, notice that the terminal condition of (\ref{eq:19}), \ie $\vp_{T,n} = \nabla_{\vx_T} \phi$,
is already the gradient w.r.t. the network output without any manipulation.
Next, to show that $\vp_{t,n}$ corresponds to the Back-propagation at stage $t$,
consider, for instance, the computation graphs of the residual block in Fig.~\ref{fig:app1},
where we replot Fig.~\ref{fig:dy-net} together with its Back-propagation dynamic
and denote $\vq$ as the gradient w.r.t. the activation vector $\vz$.
Then, it can be shown by induction that $\vp_{t,n}$ augments all ``$\vq$''s aligned at stage $t$.
Indeed,
suppose $\vp_{t+1,n}$ is the augmentation of the Back-propagation gradients at stage $t{+}1$,
\ie $\vp_{t+1,n} \coloneqq [\vq_{t+1},\vq_{\text{res}}]^\transpose$,
then the co-state at the current stage $t$ can be expanded as
\begin{align*}
    \vp_{t,n} = \nabla_{\vx_t} H_{t,n} = \nabla_{\vx_t} F_t^\transpose \vp_{t+1,n}
    =
    \left[
    \begin{array}{cc}
        \nabla_{\vz_t}f_{t,1}   & \nabla_{\vz_{t-1}}f_{t,1} \\
        \nabla_{\vz_{t}}f_{t,2} & \nabla_{\vz_{t-1}}f_{t,2}
    \end{array}
    \right]^\transpose
    \left[
    \begin{array}{c}{\vq_{t+1}} \\ {\vq_{\text{res}} }\end{array}
    \right]
    =
    \left[
    \begin{array}{c}{\nabla_{\vz_t} {f_{t,1}}^\transpose \vq_{t+1}}\\{\nabla_{\vz_{t-1}} {f_{t,2}}^\transpose \vq_{\text{res}}}\end{array}
    \right]
    =
    \left[
    \begin{array}{c}{\vq_{t}}\\{\vq_{t-1}}\end{array}
    \right]
    ,
\end{align*}
which augments all ``$\vq$''s at stage $t$.
Once we connect $\vp_{t,n}$ to the Back-propagation dynamics, it can be verified that
\begin{align*}
    H^{t,n}_{\theta} \equiv \nabla_{\theta_{t,n}} H_{t,n} = \nabla_{\theta_{t,n}} \ell_{t,n} + \nabla_{\theta_{t,n}} F_t^\transpose \vp_{t+1,n}.
\end{align*}
is indeed the gradient w.r.t. the parameter $\theta_{t,n}$ of each layer $f_{t,n}$.
Therefore, taking the iterative update $\theta_{t,n}\leftarrow\theta_{t,n} - H^{t,n}_{\theta}$ is equivalent to descending along the SGD direction, up to a learning rate scaling.
Similarly, setting different precondition matrices $\mM$ will recover other standard methods.
Hence, we conclude the proof.

\hfill $\qedsymbol$

\textbf{Optimality principle for \textnormal{$\etaOCG$}.}
For the completeness, below we provide the optimality principle for the cooperative open-loop information structure $\etaOCG$.
\begin{definition}[Cooperative optimality principle by $\etaOCG$]
    A set of strategy,
    $\{ \theta^*_{t,n} : \forall t,n \}$,
    provides an open-loop optimal solution to the joint optimization (\ref{eq:group-ration}) if
    \begin{align*}
        \theta^*_{t,1}, {\cdots}, \theta^*_{t,N} =& \argmin_{\theta_{t,n}:n\in[N]} \text{ } \widetilde{H}_{t}
            (\vx_t,\widetilde{\vp}_{t+1},\theta_{t,1}, {\cdots}, \theta_{t,N}),
        \\
         \text{ where} \quad
         \theta^*_{t,n} \equiv \theta^*_{t,n}(\etaOCG)
         \quad &\text{and} \quad
        \widetilde{H}_{t} \coloneqq \textstyle\sum_{n=1}^N \ell_{t,n} + F_t^\transpose \widetilde{\vp}_{t+1}
    \end{align*}
    is the ``group'' Hamiltonian at stage $t$. Similar to OLNE, the joint co-state $\widetilde{\vp}_{t}$ can be simulated by
    \begin{align*}
        \widetilde{\vp}_{t} = \nabla_{\vx_t} \widetilde{H}_{t}, \quad \widetilde{\vp}_{T} = \textstyle\sum_{n=1}^N \nabla_{\vx_T} \phi_{n}.
    \end{align*}
\end{definition}
In this work, we focus on solving the optimality principle inherited in $\etaCCG$ as a representative of the CG optimality.
Since $\etaOCG \subset \etaCCG$, the latter captures richer information and tends to perform better in practice,
as evidenced by Fig.~\ref{fig:mnist}.

\section{Missing Derivations in Section \ref{sec:4}} \label{sec:app4}

\subsection{Complete Derivation of the Iterative Updates} \label{sec:C1}

\textbf{Derivation of FNE update.}
Our goal is to approximately solve the Isaacs-Bellman recursion (\ref{eq:i-bellman}) only up to second-order.
Recall that the second-order expansion of $Q_{t,n}$ at some fixed point $(\vx_t,\theta_{t,n})$
takes the form
\bgroup
\def\arraystretch{1.3}%
\begin{align}
        Q_{t,n} \approx
        \frac{1}{2}
        \left[\begin{array}{c} {\mathbf{1}} \\ {\dvx_t} \\ {\dth_{t,n}} \end{array}\right]^{\transpose}
        \left[\begin{array}{lll}
            {Q_{t,n}}    & {\Qxt^\transpose} & {\Qut^\transpose} \\
            {\Qxt} & {\Qxxt} & {\Quxt^\transpose} \\
            {\Qut} & {\Quxt} & {\Quut}
            \end{array}\right]
        \left[\begin{array}{c} {\mathbf{1}} \\ {\dvx_t} \\ {\dth_{t,n}} \end{array}\right]
    , \quad \text{where }
    \begin{array}{r@{=}}
    {\Qxt } \\
    {\Qut } \\
    {\Quut } \\
    {\Quxt } \\
    {\Qxxt }
    \end{array}
    \begin{array}{l@{}}
    {\FxtT \Vxtn} \\
    {\FutT \Vxtn + \lutn} \\
    {\FutT \Vxxtn \text{ } \Fut + \luutn} \\
    {\FutT \Vxxtn \text{ } \Fxt} \\
    {\FxtT \Vxxtn \text{ } \Fxt }
    \end{array}
    \label{eq:fne-expand}
\end{align}
\egroup
follow standard chain rule (recall $Q_{t,n}\coloneqq \ell_{t,n} + V_{t+1,n} \circ F_t$) with the linearized dynamics
$\Fut \equiv \nabla_{\theta_{t,n}} F_t$ and $\Fxt \equiv \nabla_{\vx_{t}} F_t$.
The expansion (\ref{eq:fne-expand}) is a standard quadratic programming, and its analytic solution is given by
\begin{align*}
    -\delta \pi_{t,n}^* \equiv - \delta \theta_{t,n}^*(\dvx_t) =  - (\Quutn)^\Inv (\Qutn + \Quxtn \dvx_t) \eqqcolon - (\ktn + \Ktn \dvx_t).
\end{align*}
Substituting this solution back to the Isaacs-Bellman recursion gives us the local expression of $V_{t,n}$,
\begin{align}
    V_{t,n} \approx Q_{t,n} - \frac{1}{2}  (\Qutn)^\transpose (\Quutn)^\Inv \Qutn.
    \label{eq:ibeq}
\end{align}
Therefore, the local derivatives of $V_{t,n}$ can be computed by
\begin{align*}
    V_\vx^{t,n} &= Q_\vx^{t,n} -  Q_{\vx\theta}^{t,n} (\Quutn)^\Inv \Qutn = Q_\vx^{t,n} - Q_{\vx\theta}^{t,n} \ktn \\
    V_{\vx\vx}^{t,n} &= Q_{\vx\vx}^{t,n} -  Q_{\vx\theta}^{t,n} (\Quutn)^\Inv Q_{\theta\vx}^{t,n} = Q_{\vx\vx}^{t,n} - Q_{\vx\theta}^{t,n} \Ktn.
\end{align*}

\newcommand\bovermat[2]{%
  \makebox[0pt][l]{$\smash{\overbrace{\phantom{%
    \begin{matrix}#2\end{matrix}}}^{\text{#1}}}$}#2}

\newcommand\bundermat[2]{%
  \makebox[0pt][l]{$\smash{\underbrace{\phantom{%
    \begin{matrix}#2\end{matrix}}}_{\text{#1}}}$}#2}

\textbf{Derivation of GR update.}
We will adopt the same terminology $\vu \equiv \theta_{t,1}, \vv \equiv \theta_{t,2}$.
Following the procedure as in the FNE case, we can perform the second-order expansion of $P_t$ at some fixed point $(\vx_t,\vu,\vv)$.
The analytic solution to the corresponding quadratic programming is given by
\begin{align}
    -\left[\begin{array}{c}
        {\delta \pi^*_{t,1}} \\
        {\delta \pi^*_{t,2}}
    \end{array}\right]
    = -
    \left[\begin{array}{cc}
        {\Puut} & {\Puvt} \\
        {\Pvut} & {\Pvvt}
    \end{array}\right]^\Inv
    \left(
    \left[\begin{array}{c}
        {\Put} \\
        {\Pvt}
    \end{array}\right]
    +
    \left[\begin{array}{c}
        {\Puxt} \\
        {\Pvxt}
    \end{array}\right]
    \dvx_t\right),
    \label{eq:coop-du-dev}
\end{align}
where the block-matrices inversion can be expanded using the Schur complement.

\begin{align}
\begin{bmatrix}
    {\Puut} & {\Puvt} \\
    {\Pvut} & {\Pvvt}
\end{bmatrix}^{\Inv}
=
\begin{bmatrix}
    (\bovermat{$\PuuCt$}{\Puut-\Puvt\PvvInvt\Pvut})^\Inv & {-\PuuCInvt\Puvt\PvvInvt} \\
    {-\PvvCInvt\Pvut\PuuInvt} & (\bundermat{$\PvvCt$}{\Pvvt-\Pvut\PuuInvt\Puvt})^\Inv
\end{bmatrix}
\period \label{eq:schur}
\end{align}

Hence, (\ref{eq:coop-du-dev}) becomes
\begin{align*}
    \left[\begin{array}{c}
        {\delta \pi^*_{t,1}} \\
        {\delta \pi^*_{t,2}}
    \end{array}\right]
    &=
    \left[\begin{array}{c}
        {\PuuCInvt(\Put - \Puvt\PvvInvt\Pvt)} \\
        {\PvvCInvt(\Pvt - \Pvut\PuuInvt\Put)}
    \end{array}\right]
    +
    \left[\begin{array}{c}
        {\PuuCInvt(\Puxt - \Puvt\PvvInvt\Pvxt)} \\
        {\PvvCInvt(\Pvxt - \Pvut\PuuInvt\Puxt)}
    \end{array}\right]
    \dvx_t \\
    &=
    \left[\begin{array}{c}
        {\PuuCInvt(\Put - \Puvt\It)} \\
        {\PvvCInvt(\Pvt - \Pvut\kt)}
    \end{array}\right]
    +
    \left[\begin{array}{c}
        {\PuuCInvt(\Puxt - \Puvt\Lt)} \\
        {\PvvCInvt(\Pvxt - \Pvut\Kt)}
    \end{array}\right]
    \dvx_t \\
    &\eqqcolon
    \left[\begin{array}{c}
        {\kut} \\
        {\Ivt}
    \end{array}\right]
    +
    \left[\begin{array}{c}
        {\Kut} \\
        {\Lvt}
    \end{array}\right] \dvx_t ,
\end{align*}
where we denote the non-cooperative iterative update for Player 1 and 2 respectively by
\begin{align*}
    \delta \vu_t(\dvx_t) &= \kt + \Kt\dvx_t,
        \quad \text{where }\text{ } \kt \coloneqq \PuuInvt\Put \quad
        \text{ and } \quad \Kt \coloneqq \PuuInvt\Puxt, \\
    \delta \vv_t(\dvx_t) &= \It + \Lt\dvx_t,
        \quad \text{ }\text{ } \text{where }\text{ } \It \coloneqq \PvvInvt\Pvt  \quad
        \text{ } \text{ and } \quad \Lt \coloneqq \PvvInvt\Pvxt.
\end{align*}
Substituting this solution back to the GR Bellman equation gives the local expression of $W_{t}$,
\begin{align}
    W_{t} \approx P_{t} - \frac{1}{2}
    \left[\begin{array}{c}
        {\Put} \\
        {\Pvt}
    \end{array}\right]^\transpose
    \begin{bmatrix}
        {\Puut} & {\Puvt} \\
        {\Pvut} & {\Pvvt}
    \end{bmatrix}^{\Inv}
    \left[\begin{array}{c}
        {\Put} \\
        {\Pvt}
    \end{array}\right].
    \label{eq:grb}
\end{align}
Finally, taking the derivatives yields the formula for updating the derivatives of $W_{t}$,
\begin{align}
    W_\vx^{t} = P_\vx^{t} - \frac{1}{2} \left( \Pxut\kut + \Pxvt\Ivt + \KutT\Put + \LvtT\Pvt \right)
    \quad \text{and} \quad
    W_{\vx\vx}^{t} = P_{\vx\vx}^{t,n} - \Pxut\Kut - \Pxvt\Lvt,
    \label{eq:gr-vxx}
\end{align}
which is much complex than (\ref{eq:fne-vxx}).

\subsection{Kronecker Factorization and Proof of Proposition~\ref{prop:coop}} \label{sec:C2}
We first provide some backgrounds for the Kronecker factorization (KFAC; \citet{martens2015optimizing}).
KFAC relies on the fact that for an affine mapping layer, \ie
$\vz_{t+1} = f_t(\vz_t,\theta_t) \coloneqq \mW_t \vz_t + \vb_t, \text{ } \theta_t \coloneqq \vectorize([\mW_t, \vb_t])$,
the gradient of the training objective $L$ w.r.t. the parameter $\theta_t$ admits a compact factorization,
\begin{align*}
    \nabla_{\theta_t} L = \nabla_{\theta_t} f_t^\transpose \nabla_{\vz_{t+1}} L = \vz_t \otimes \nabla_{\vz_{t+1}} L,
\end{align*}
where $\otimes$ denotes the Kronecker product.
With this, the layer-wise Fisher information matrix, or equivalently the Gauss-Newton (GN) matrix, for classification can be approximated with
\begin{align*}
    \E[\nabla_{\theta_t} L \nabla_{\theta_t} L^\transpose]
    = \E[(\vz_t \otimes \nabla_{\vz_{t+1}} L) (\vz_t\otimes \nabla_{\vz_{t+1}} L)^\transpose]
    \approx \E[\vz_t\vz_t^\transpose] \otimes \E[\nabla_{\vz_{t+1}} L \nabla_{\vz_{t+1}} L^\transpose].
\end{align*}
We can adopt this factorization to our setup by first recalling from
our proof of Proposition~\ref{prop:nash-open-loop} (see Appendix~\ref{sec:app3})
that
$(\nabla_{\theta_t} L, \nabla_{\vz_{t+1}} L)$ are interchangeable with
$(H_{\theta}^t, \vp_{t+1})$, or equivalently $(H_{\theta}^t, H_{\vz}^{t+1})$.
Hence, the GN approximation of $\E[H^{t}_{\theta\theta}]$ can be factorized by
\begin{align}
    \E[\Hut\Hut^\transpose]
    \approx \E[\vz_t\vz_t^\transpose] \otimes \E[\vp_{t+1} \vp_{t+1}^\transpose]
    = \E[\vz_t\vz_t^\transpose] \otimes \E[H_{\vz}^{t+1} H_{\vz}^{t+1\text{ }\transpose}].
    \label{eq:gn-pmp}
\end{align}
Equation (\ref{eq:gn-pmp}) suggests that
KFAC factorizes the parameter curvature with two smaller matrices using the {activation} state $\vz_t$
and the derivative of \emph{some optimality} (in this case the Hamiltonian $H$) w.r.t. $\vz_{t+1}$.
The main advantage of this factorization is to exploit the following formula,
\begin{align}
      (\mA \otimes \mB)^\Inv \vectorize (\mW)
    = (\mA^\Inv \otimes \mB^\Inv) \vectorize (\mW)
    = \vectorize (\mB^\Inv \mW \mA^{-\transpose}),
    \label{eq:kfac-formula}
\end{align}
which allows one to efficiently inverse the parameter curvature with two smaller matrices.

Now, let us proceed to the proof of Proposition~\ref{prop:coop}.
First notice that for the shared dynamics considered in Fig.~\ref{fig:dy-net}, we have
\begin{align*}
    F_t(\vx_t,\vu,\vv) \coloneqq
    \left[\begin{array}{c}
        {f_{t,1}(\vz_1,\vu)} \\
        {f_{t,2}(\vz_2,\vv)}
    \end{array}\right]
    =
    \left[\begin{array}{cc}
        {f_{t,1}(\cdot,\vu)} & {\mathbf{0}} \\
        {\mathbf{0}}         & {f_{t,2}(\cdot,\vv)}
    \end{array}\right]
    \left[\begin{array}{c}
        {\vz_1} \\
        {\vz_2}
    \end{array}\right],
\end{align*}
which resembles the affine mapping concerned by KFAC.
This motivates the following approximation,
\begin{align}
    \E[ P_{\theta}^t {P_{\theta}^t}^\transpose]
    \approx \E[\vx_t\vx_t^\transpose] \otimes \E[W_{\vx}^{t+1} {W_{\vx}^{t+1}}^\transpose].
    \label{eq:coop-kfac2}
\end{align}
Similar to (\ref{eq:gn-pmp}), this approximation (\ref{eq:coop-kfac2}) factorizes the GN matrix with the MPDG state $\vx_t$
and the derivative of an optimality (in this case it becomes the GR value function $W_{t+1}$) w.r.t. $\vx_{t+1}$.

If we denote the derivatives w.r.t. the outputs of $f_{t,1}$ and $f_{t,2}$ by $\vg_1$ and $\vg_2$,
\ie $W_{\vx}^{t+1} \coloneqq [\vg_1,\vg_2]^\transpose$,
and rewrite $\vx_t \coloneqq [\vz_1,\vz_2]^\transpose$,
then (\ref{eq:coop-kfac2}) can be expanded by
\begin{align*}
    \E[{\vx_t} {\vx_t}^\transpose]
    =&
        \begin{bmatrix}
            \E[{\vz_1} {\vz_1}^\transpose] & \E[{\vz_1} {\vz_2}^\transpose] \\
            \E[{\vz_2} {\vz_1}^\transpose] & \E[{\vz_2} {\vz_2}^\transpose]
        \end{bmatrix}
    \eqqcolon
        \begin{bmatrix}
            \Auu & \Auv \\
            \Avu & \Avv
        \end{bmatrix}
    \\
    \E[{W_{\vx}^{t+1}} {W_{\vx}^{t+1}}^\transpose]
    =&
        \begin{bmatrix}
            \E[{\vg_1} {\vg_1}^\transpose] & \E[{\vg_1} {\vg_2}^\transpose] \\
            \E[{\vg_2} {\vg_1}^\transpose] & \E[{\vg_2} {\vg_2}^\transpose]
        \end{bmatrix}
    \eqqcolon
        \begin{bmatrix}
            \Buu & \Buv \\
            \Bvu & \Bvv
        \end{bmatrix}
    .
\end{align*}
Their inverse matrices are given by the Schur component.
\begin{equation}
    \begin{split}
    \begin{bmatrix}
        \Auu & \Auv \\
        \Avu & \Avv
    \end{bmatrix}^\Inv
    &=
        \begin{bmatrix}
            \AuuCInv & - \AuuCInv \Auv \AvvInv \\
            - \AvvCInv \Avu \AuuInv & \AvvCInv
        \end{bmatrix}
    \comma \quad \text{where}
    \begin{cases}
        \AuuC \coloneqq \Auu - \Auv \AvvInv \AuvT \\
        \AvvC \coloneqq \Avv - \Avu \AuuInv \AvuT
    \end{cases}
    \\
    \begin{bmatrix}
    \Buu & \Buv \\
    \Bvu & \Bvv
    \end{bmatrix}^\Inv
    &=
        \begin{bmatrix}
            \BuuCInv & - \BuuCInv \Buv \BvvInv \\
            - \BvvCInv \Bvu \BuuInv & \BvvCInv
        \end{bmatrix}
    \comma \quad \text{where}
    \begin{cases}
        \BuuC \coloneqq \Buu - \Buv \BvvInv \BuvT \\
        \BvvC \coloneqq \Bvv - \Bvu \BuuInv \BvuT
    \end{cases}
\end{split} \label{eq:ABcoop}
\end{equation}
With all these, the cooperative open gain can be computed with the formula (\ref{eq:kfac-formula}),
\begin{equation}
\begin{split}
    \left(\E[\vx_t\vx_t^\transpose] \otimes \E[W_{\vx}^{t+1} {W_{\vx}^{t+1}}^\transpose]\right)^\Inv
    \vectorize(\begin{bmatrix} \Put & \mathbf{0} \\ \mathbf{0} & \Pvt \end{bmatrix})
=   \vectorize\left(
    \begin{bmatrix}
        \Buu & \Buv \\
        \Bvu & \Bvv
    \end{bmatrix}^\Inv
    \begin{bmatrix} \Put & \mathbf{0} \\ \mathbf{0} & \Pvt \end{bmatrix}
    \begin{bmatrix}
        \Auu & \Auv \\
        \Avu & \Avv
    \end{bmatrix}^{-\transpose}\right).
\end{split} \label{eq:46}
\end{equation}
Substituting (\ref{eq:ABcoop}) into (\ref{eq:46}),
after some algebra we will arrive at \textit{the KFAC of the cooperative open gain} suggested in (\ref{eq:coop-open}).
\begin{align*}
    \kut
\approx&  \vectorize(\markblue{\BuuCInv} \Put \markblue{\AuuCInvT} + \markblue{\BuuCInv} \Buv \BvvInv \Pvt (\markblue{\AuuCInv} \Auv \AvvInv )^\transpose) \\
=&  \vectorize(\markblue{\BuuCInv} (\Put + \Buv \markgreen{\BvvInv \Pvt \AvvInvT} \AuvT) \markblue{\AuuCInvT}) \\
=&  \vectorize(\BuuCInv (\Put + \Buv \markgreen{\vectorize^\Inv(\It)} \AuvT) \AuuCInvT) \label{eq:28} \numberthis,
\end{align*}
where the last equality follows by another KFAC approximation 
$\markgreen{\It \approx (\Avv \otimes \Bvv)^\Inv\vectorize(\Pvt) = \vectorize(\BvvInv \Pvt \AvvInvT)}$.
Finally, recalling the expression, $\kut \coloneqq  \PuuCInvt (\Put  - \Puvt \It)$, from (\ref{eq:gr-update})
and rewriting (\ref{eq:28}) by
\begin{align*}
    \kut
\approx&  \vectorize(\BuuCInv (\Put + \Buv {\vectorize^\Inv(\It)} \AuvT) \AuuCInvT) \\
=&  (\AuuC \otimes \BuuC)^\Inv \vectorize(\Put + \Buv \vectorize^\Inv(\It) \AuvT)
\end{align*}
imply the KFAC representation $\PuuCt \approx \AuuC \otimes \BuuC$ in (\ref{eq:coop-kfac}).
Hence we conclude the proof.
\hfill $\qedsymbol$

\subsection{Proof of Theorem~\ref{thm:alg-connection}} \label{sec:C3}
We first show that setting $\Puvt\coloneqq\mathbf{0}$ in the update (\ref{eq:gr-update}) yields (\ref{eq:fne-update}).
To begin, observe that when $\Puvt$ vanishes, the cooperative gains $(\kut,\Kut)$ appearing in (\ref{eq:gr-update}) degenerate to
$\kut = \PuuInvt\Put$ and $\Kut = \PuuInvt \Puxt$.
Therefore, it is sufficient to prove the following result.\footnote{
  We consider the two-player setup for simplicity, yet the methodology applies to the multi-player setup.
}
\begin{lemma} \label{lemma9}
    Suppose $Q_{t,n}$ in (\ref{eq:i-bellman}) and $P_t$ in (\ref{eq:gr-bellman})
    are expanded up to second-order along the same local trajectory
    $\{(\vx_t,\theta_{t,n},\cdots,\theta_{t,N}): \forall t \in [T]\}$,
    then we will have the following relations when $\Puvt\coloneqq\mathbf{0}$ at all stages.
    \begin{align}
    \begin{split}
        \forall t, \quad
        Q^{t,1}_{\theta} = \Put, \quad
        Q^{t,1}_{\theta\theta} = \Puut, \quad
        Q^{t,1}_{\theta\vx} = \Puxt, \quad
        Q^{t,2}_{\theta} = \Pvt, \quad
        Q^{t,2}_{\theta\theta} = \Pvvt, \quad
        Q^{t,2}_{\theta\vx} = \Pvxt,
        \label{eq:lemma9}
    \end{split}
    \end{align}
    where $(\vu_t,\vv_t) \equiv (\theta_{t,1},\theta_{t,2})$ denotes the actions for Player 1 and 2.
    Furthermore, we have
    \begin{align}
        \forall t, \quad W_t = \textstyle\sum_{n=1}^N V_{t,n}.
        \label{eq:lemma99}
    \end{align}
\end{lemma}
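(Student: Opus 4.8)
The plan is to prove both halves --- the six derivative identities (\ref{eq:lemma9}) and the value-function sum (\ref{eq:lemma99}) --- simultaneously by \textbf{backward induction on the stage index $t$}, running the per-player Isaacs--Bellman recursion (\ref{eq:i-bellman}) and the group recursion (\ref{eq:gr-bellman}) in lockstep along the common expansion trajectory $\{(\vx_t,\theta_{t,1},\cdots,\theta_{t,N})\}$. The inductive hypothesis at stage $t{+}1$ is that the two (second-order) quadratics $W_{t+1}$ and $\textstyle\sum_{n=1}^N V_{t+1,n}$ coincide at the shared expansion point $\vx_{t+1}$, i.e.
\begin{align*}
	W_{t+1}(\vx_{t+1}) = \textstyle\sum_{n=1}^N V_{t+1,n}(\vx_{t+1}), \quad W_\vx^{t+1} = \textstyle\sum_{n=1}^N \Vxtn, \quad W_{\vx\vx}^{t+1} = \textstyle\sum_{n=1}^N \Vxxtn,
\end{align*}
which is precisely what ``$W_{t+1} = \sum_n V_{t+1,n}$'' means here. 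The base case is $t=T$: since $V_{T,n}=\phi_n$ and $W_T = \sum_n \phi_n$, all three agreements hold termwise ($W_\vx^{T} = \sum_n \phi_\vx^n$, $W_{\vx\vx}^{T} = \sum_n \phi_{\vx\vx}^n$). The inductive step then processes $t = T{-}1,\ldots,0$: use the agreement at $t{+}1$ to prove (\ref{eq:lemma9}) at $t$, and then deduce the agreement at $t$.

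For the derivative identities, expand $Q_{t,n} = \ell_{t,n} + V_{t+1,n}\circ F_t$ and $P_t = \sum_m \ell_{t,m} + W_{t+1}\circ F_t$ by the same chain rule used in (\ref{eq:fne-expand}) and Appendix~\ref{sec:C1}. Using that $\ell_{t,n}$ depends only on $\theta_{t,n}$ (so only the $n$-th regularizer survives in the $\theta_{t,n}$-derivative of $P_t$), one gets
\begin{align*}
	\Qut = \lutn + (\nabla_{\theta_{t,n}} F_t)^\transpose \Vxtn, \qquad \Put = \lutn + (\nabla_{\theta_{t,n}} F_t)^\transpose W_\vx^{t+1}
\end{align*}
(written for $n=1$ with $\vu\equiv\theta_{t,1}$; $n=2$ is symmetric), and analogous formulas for the $\theta\theta$- and $\theta\vx$-blocks with $\Vxxtn$ (for $Q$) or $W_{\vx\vx}^{t+1}$ (for $P$) sandwiched in the middle. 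Substituting the hypothesis, the gap between the $Q$- and the $P$-derivative of Player $n$ equals $(\nabla_{\theta_{t,n}} F_t)^\transpose \sum_{m\neq n} V_\vx^{t+1,m}$ and its Hessian analogues. These cross terms vanish by the block structure of the residual-block dynamics $F_t$ in Fig.~\ref{fig:dy-net}: $\theta_{t,n}$ enters $F_t$ only through its own sub-map $f_{t,n}$, whose output is a distinct block of $\vx_{t+1}$, so $\nabla_{\theta_{t,n}} F_t$ contracts the off-player value derivatives only against the sub-blocks that make up precisely the inter-player curvature coupling $P^t_{\theta_m\theta_n}$ --- which the hypothesis $\Puvt\coloneqq\mathbf{0}$ (hence $\Pvut=\mathbf{0}$ by symmetry of the curvature) zeroes out. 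The cleanest bookkeeping is to carry the block-sparsity pattern induced on $\Vxtn$ and $\Vxxtn$ by the architecture along as an auxiliary part of the inductive hypothesis; this yields (\ref{eq:lemma9}).

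Given (\ref{eq:lemma9}), the value-function agreement at stage $t$ follows by a short computation. With $\Puvt=\Pvut=\mathbf{0}$ the cooperative precondition collapses, $\PuuCt = \Puut - \Puvt\PvvInvt\Pvut = \Puut$, so the gains in (\ref{eq:gr-update}) degenerate to the non-cooperative ones, $\kut=\kt=\PuuInvt\Put$, $\Kut=\Kt=\PuuInvt\Puxt$, $\Ivt=\It$, $\Lvt=\Lt$, which by (\ref{eq:lemma9}) are exactly the FNE gains $(\ktn,\Ktn)$ of (\ref{eq:fne-update}) for $n=1$ and $n=2$ --- so as a by-product (\ref{eq:gr-update}) collapses to (\ref{eq:fne-update}), recovering the first bullet of Theorem~\ref{thm:alg-connection}. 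For the state blocks the Jacobian $\nabla_\vx F_t$ is common to all players, whence $P_\vx^t = (\nabla_\vx F_t)^\transpose W_\vx^{t+1} = \sum_n (\nabla_\vx F_t)^\transpose \Vxtn = \sum_n \Qxt$ and, identically, $P_{\vx\vx}^t = \sum_n \Qxxt$, with no cross-term obstruction; the constant terms match likewise from the value part of the hypothesis. Plugging the degenerate gains into the $W_t$-propagation (\ref{eq:gr-vxx}) and collapsing its four terms by symmetry (so $\KutT\Put = \Pxut\kut$, etc.) gives $W_\vx^t = P_\vx^t - \Pxut\kut - \Pxvt\Ivt = \sum_n(\Qxt - \Qxut\ktn)$, which is $\sum_n V_\vx^{t,n}$ by (\ref{eq:fne-vxx}); the same manipulation yields $W_{\vx\vx}^t = \sum_n V_{\vx\vx}^{t,n}$, closing the induction.

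The main obstacle is the cross-term cancellation in the middle step: one must pin down the block-sparsity pattern that $\Vxtn$ and $\Vxxtn$ inherit from the architecture, verify that contracting with $\nabla_{\theta_{t,n}} F_t$ isolates only the coupling block $P^t_{\theta_m\theta_n}$ that has been set to zero, and check that this persists at the stages where parallel paths recombine. Everything downstream of (\ref{eq:lemma9}) is the routine chain-rule and Schur-complement bookkeeping already spelled out in Appendix~\ref{sec:C1}.
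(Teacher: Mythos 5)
Your overall scaffolding---backward induction on $t$ with the hypothesis that $W_{t+1}$ and $\sum_n V_{t+1,n}$ agree as local quadratic models at the shared expansion point, the base case at the terminal stage, and the final Schur-complement collapse $\PuuCt=\Puut$ followed by the propagation of $(W^t_\vx,W^t_{\vx\vx})$---matches the paper's proof, and that last part of your argument is sound. The genuine gap is in the middle step, where you establish the block identities (\ref{eq:lemma9}). You correctly isolate the discrepancy between, say, $Q^{t,1}_{\theta}$ and $\Put$ as the contraction $(\nabla_{\theta_{t,1}}F_t)^{\transpose}\sum_{m\neq 1}V^{t+1,m}_{\vx}$, but the mechanism you invoke to kill it does not work: $\Puvt\coloneqq\mathbf{0}$ is a condition on a \emph{second-order} (curvature) block and cannot annihilate this \emph{first-order} contraction, and the auxiliary block-sparsity of $V^{t+1,m}_{\vx}$ that you propose to carry along the induction fails already at the base case---since $\phi_m=\phi$ is a function of the network output, it depends on \emph{every} block of the augmented state, so each player's value gradient is dense in $\vx_{t+1}$ at $t=T$ and remains dense thereafter. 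Pinning the cancellation on architecture-induced sparsity of the value derivatives is therefore a dead end.

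The paper reaches (\ref{eq:lemma9}) by a different and shorter route that never manufactures this cross term: the inductive hypothesis gives the \emph{functional} identity $P_t=\sum_n\ell_{t,n}+W_{t+1}\circ F_t=\sum_n(\ell_{t,n}+V_{t+1,n}\circ F_t)=\sum_n Q_{t,n}$ in one line, and once $\Puvt$ is discarded the joint quadratic program decouples as $\min_{\pi_{t,1},\pi_{t,2}}P_t=\min_{\pi_{t,1}}Q_{t,1}+\min_{\pi_{t,2}}Q_{t,2}$, so the $(\dvx_t,\dth_{t,n})$ blocks of $P_t$ are identified with the FNE expansion of $Q_{t,n}$---which, by the convention of (\ref{eq:fne-expand}), is taken only in the variables observable to Player $n$, so the dependence of $Q_{t,\neg n}$ on $\theta_{t,n}$ never enters the matching. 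The scalar identity $W_t=\sum_n V_{t,n}$ is then read off by comparing (\ref{eq:grb}) with (\ref{eq:ibeq}), rather than via the derivative propagation (\ref{eq:gr-vxx}) as you do (both work). To repair your per-block computation you would have to adopt that same expansion convention explicitly---define $Q^{t,n}_{\theta}$ with $\theta_{t,\neg n}$ held fixed so the cross term is excluded by construction rather than by cancellation---since as written the identity $Q^{t,1}_{\theta}=\Put$ holds only modulo the term $(\nabla_{\theta_{t,1}}F_t)^{\transpose}V^{t+1,2}_{\vx}$, which your sparsity argument cannot remove.
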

\begin{proof}
    We will proceed the proof by induction. At the terminal stage $T-1$, we have
    \begin{align*}
        P_{T-1} = \sum_{n=1}^2 \ell_{T-1,n} + W_{T} \circ F_{T-1}
                = \sum_{n=1}^2 \left( \ell_{T-1,n} + \phi_{n} \circ F_{T-1} \right)
                = \sum_{n=1}^2 Q_{T-1,n},
    \end{align*}
    since $\phi_n = V_{T,n}$.
    This implies that when solving the second-order expansion for
    $\pi_{T-1,1}$ and $\pi_{T-1,2}$,
    we will have
    \begin{align*}
        \min_{\pi_{T-1,1},\pi_{T-1,2}} P_{T-1} = \min_{\pi_{T-1,1}} Q_{T-1,1} + \min_{\pi_{T-1,2}} Q_{T-1,2}
    \end{align*}
    since the cross-correlation matrix $P_{\vu\vv}^{T-1}$ is discarded.
    Therefore, all equalities in (\ref{eq:lemma9}) hold at this stage.
    Furthermore,
    substituting $P_{\vu\vv}^{T-1} \coloneqq \mathbf{0}$ into (\ref{eq:grb}) yields the following GR value function
    \begin{align*}
        W_{T-1} &= P_{T-1} - \frac{1}{2}\left(({P_{\vu}^{T-1}})^\transpose (P_{\vu\vu}^{T-1})^\Inv {P_{\vu}^{T-1}} + ({P_{\vv}^{T-1}})^\transpose (P_{\vv\vv}^{T-1})^\Inv {P_{\vv}^{T-1}} \right) \\
                &= \sum_{n=1}^2 \left(Q_{T-1,n} - \frac{1}{2} (Q_{\theta}^{T-1,n})^\transpose (Q_{\theta\theta}^{T-1,n})^\Inv Q_{\theta}^{T-1,n}\right)
                 = \sum_{n=1}^2 V_{T-1,n}.
    \end{align*}
    So (\ref{eq:lemma99}) also holds.
    Now, suppose (\ref{eq:lemma9}, \ref{eq:lemma99}) hold at $t+1$, then
    \begin{align*}
        P_t = \sum_{n=1}^2 \ell_{t,n} + W_{t+1} \circ F_{t}
            = \sum_{n=1}^2 \left( \ell_{t,n} + V_{t+1,n} \circ F_{t} \right)
            = \sum_{n=1}^2 Q_{t,n}.
    \end{align*}
    Together with $P_{\vu\vv}^{t} \coloneqq \mathbf{0}$, we can see that all equalities in (\ref{eq:lemma9}) hold.
    Furthermore, it implies that
    \begin{align*}
        W_{t} = P_{t} - \frac{1}{2}\left( \Put^\transpose \PuuInvt \Put + \Pvt^\transpose \PvvInvt \Pvt \right)
              = \sum_{n=1}^2 \left(Q_{t,n} - \frac{1}{2}  (\Qutn)^\transpose (\Quutn)^\Inv \Qutn \right)
              = \sum_{n=1}^2 V_{t,n}.
    \end{align*}
    Hence we conclude the proof.
\end{proof}

Next, we proceed to the second case, which suggests that running {(\ref{eq:fne-update}) with $(\Quxtn,\Quutn)\coloneqq(\mathbf{0},\mI)$ yields SGD}.
Since the FNE update in this case degenerates to $\delta \pi^*_{t,n} = \Qutn$, it is sufficient to prove the following lemma.
\begin{lemma} \label{lemma10}
    Suppose $H_{t,n}$ in (\ref{eq:olne}) and $Q_{t,n}$ in (\ref{eq:i-bellman})
    are expanded up to second-order along the same local trajectory
    $\{(\vx_t,\theta_{t,n},\cdots,\theta_{t,N}): \forall t \in [T]\}$,
    then we will have the following relations when $(\Quxtn,\Quutn)\coloneqq(\mathbf{0},\mI)$ for all stages.
    \begin{align}
        \forall t, \quad
        \Qutn = H_{\theta}^{t,n}, \quad
        V_{\vx}^{t,n} = \vp_{t,n}.
        \label{eq:lemma10}
    \end{align}
\end{lemma}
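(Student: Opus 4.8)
The plan is to prove Lemma~\ref{lemma10} by backward induction on $t$, in exact parallel with the proof of Lemma~\ref{lemma9}. The inductive hypothesis carries the two identities $\Qutn = H_\theta^{t,n}$ and $V_\vx^{t,n} = \vp_{t,n}$ together, with the second one seeded at the terminal stage. For the base case I would use $V_{T,n} = \phi_n$, which gives $V_\vx^{T,n} = \phi_\vx^n$, together with the adjoint terminal condition in (\ref{eq:19}), which gives $\vp_{T,n} = \phi_\vx^n$; hence $V_\vx^{T,n} = \vp_{T,n}$. The first identity is only asserted for $t \le T-1$, so nothing is required at $t = T$.

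For the inductive step I assume $V_\vx^{t+1,n} = \vp_{t+1,n}$ and first establish $\Qutn = H_\theta^{t,n}$. Since $Q_{t,n} \coloneqq \ell_{t,n} + V_{t+1,n}\circ F_t$, the chain-rule formula in (\ref{eq:fne-expand}) gives $\Qutn = \lutn + \FutT \Vxtn$, and the hypothesis $\Vxtn = \vp_{t+1,n}$ turns this into $\Qutn = \lutn + \FutT \vp_{t+1,n}$. On the other hand, differentiating $H_{t,n} \coloneqq \ell_{t,n}(\theta_{t,n}) + F_t^\transpose \vp_{t+1,n}$ from (\ref{eq:19}) w.r.t.\ $\theta_{t,n}$ yields $H_\theta^{t,n} = \lutn + \FutT \vp_{t+1,n}$, since $\vp_{t+1,n}$ is independent of $\theta_{t,n}$; so the two agree.

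Next I would establish $V_\vx^{t,n} = \vp_{t,n}$. Forcing $\Quxtn \coloneqq \mathbf{0}$ also forces $Q_{\vx\theta}^{t,n} = (\Quxtn)^\transpose = \mathbf{0}$, so the value-derivative recursion (\ref{eq:fne-vxx}) collapses to $V_\vx^{t,n} = Q_\vx^{t,n}$ regardless of the gain $\ktn$, and the substitution $\Quutn \coloneqq \mI$ in (\ref{eq:fne-update}) plays no role in this line. By (\ref{eq:fne-expand}), $Q_\vx^{t,n} = \FxtT \Vxtn = \FxtT \vp_{t+1,n}$ using the hypothesis. Comparing with the adjoint dynamics $\vp_{t,n} = \nabla_{\vx_t} H_{t,n}$ in (\ref{eq:19}), and recalling that $\ell_{t,n}$ depends only on $\theta_{t,n}$ (so $\nabla_{\vx_t}\ell_{t,n} = \mathbf{0}$), I obtain $\vp_{t,n} = \FxtT \vp_{t+1,n} = Q_\vx^{t,n} = V_\vx^{t,n}$, which closes the induction. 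Note that only $V_\vx$ enters anywhere, so the (now irrelevant) propagation of $V_{\vx\vx}$ under the forced curvature is never used.

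To finish Theorem~\ref{thm:alg-connection} from the lemma, I would then observe that with $(\Quxtn,\Quutn) \coloneqq (\mathbf{0},\mI)$ the gains in (\ref{eq:fne-update}) reduce to $\Ktn = (\Quutn)^\Inv\Quxtn = \mathbf{0}$ and $\ktn = (\Quutn)^\Inv\Qutn = \Qutn$, so $\delta \pi^*_{t,n} = \Qutn = H_\theta^{t,n}$, which is exactly the SGD direction by Proposition~\ref{prop:nash-open-loop}. I do not anticipate a real obstacle: the argument is essentially the $N=1$ specialization of the bookkeeping in Lemma~\ref{lemma9}, and the only subtlety is checking that injecting a fixed curvature into the backward pass leaves the $V_\vx$-recursion self-consistent, which is the decoupling remark above.
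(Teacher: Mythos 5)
Your proof is correct and follows essentially the same route as the paper's: backward induction seeded by $V_\vx^{T,n}=\phi_\vx^n=\vp_{T,n}$, with the inductive step identifying $\Qutn=\lutn+\FutT\vp_{t+1,n}=H_\theta^{t,n}$ and, using $Q_{\vx\theta}^{t,n}=\mathbf{0}$ to collapse (\ref{eq:fne-vxx}), $V_\vx^{t,n}=Q_\vx^{t,n}=\FxtT\vp_{t+1,n}=\vp_{t,n}$. Your added observation that the $V_{\vx\vx}$ propagation is never needed, and the closing remark recovering the SGD direction, are consistent with (and slightly more explicit than) the paper's treatment.
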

\begin{proof}
    Again, we will proceed the proof by induction.
    First, notice that $V^{T,n}_\vx = \phi_\vx^n = \vp_{T,n}$ no matter
    whether or not $\Quxtn$ and $\Quutn$ degenerate.
    At the terminal stage $T-1$, we have
    \begin{align*}
        Q_{\theta}^{T-1,n}
        = \ell_{\theta}^{T-1,n} + (F_{\theta}^{T-1})^\transpose V_{\vx}^{T,n}
        = \ell_{\theta}^{T-1,n} + (F_{\theta}^{T-1})^\transpose \vp_{T,n} = H_{\theta}^{T-1,n}.
    \end{align*}
    Also, when $Q_{\theta\vx}^{T-1,n} \coloneqq \mathbf{0}$, (\ref{eq:fne-vxx}) becomes
    \begin{align*}
        V_\vx^{T-1,n}
        = Q_\vx^{T-1,n}
        = (F_{\vx}^{T-1})^\transpose V_{\vx}^{T,n}
        = (F_{\vx}^{T-1})^\transpose \vp_{T,n}
        = H_{\vx}^{T-1,n}
        = \vp_{T-1,n}.
    \end{align*}
    Hence, (\ref{eq:lemma10}) holds at $T-1$. Now, suppose these relations hold at $t+1$, then
    \begin{align*}
        Q_{\theta}^{t,n}
        = \ell_{\theta}^{t,n} + (F_{\theta}^{t})^\transpose V_{\vx}^{t+1,n}
        = \ell_{\theta}^{t,n} + (F_{\theta}^{t})^\transpose \vp_{t+1,n} = H_{\theta}^{t,n}
    \end{align*}
    and similarly
    \begin{align*}
        V_\vx^{t,n}
        = Q_\vx^{t,n}
        = (F_{\vx}^{t})^\transpose V_{\vx}^{t+1,n}
        = (F_{\vx}^{t})^\transpose \vp_{t+1,n}
        = H_{\vx}^{t,n}
        = \vp_{t,n}.
    \end{align*}
    Hence, we conclude the proof.
\end{proof}
Finally, the last case follows readily by combining Lemma \ref{lemma9} and \ref{lemma10},
so we conclude all proofs.
\hfill $\qedsymbol$

\section{More on the Experiments} \label{sec:app5}

All experiments are run with Pytorch on the GPU machines, including GTX 1080 TI, GTX 2070, and TITAN RTX.
We preprocessed all datasets with standardization.
We also perform data augmentation when training CIFAR100.
Below we detail the setup for each experiment.

\begingroup
\setlength{\columnsep}{1pt}%
\begin{wrapfigure}[7]{r}{0.37\textwidth}
  \vspace{-12pt}
   \vskip -0.1in
  \captionof{table}{Hyper-parameter search in Table \ref{table:clf}}
  \vskip 0.1in
  \begin{center}
  \begin{small}
  \begin{tabular}{r|c}
  \toprule
  Standard Baselines & Learning Rate (LR) \\
  \midrule
  SGD             & $(7\mathrm{e}\text{-}2,5\mathrm{e}\text{-}1)$ \\
  Adam \& RMSprop & $(7\mathrm{e}\text{-}4,1\mathrm{e}\text{-}2)$ \\
  EKFAC           & $(1\mathrm{e}\text{-}2,3\mathrm{e}\text{-}1)$ \\
  \bottomrule
  \end{tabular} \label{table:hyper}
  \end{small}
  \end{center}
  \vskip 0.5in
\end{wrapfigure}
  \textbf{Classification (Table~\ref{table:clf} and \ref{table:clf2}).}
  For CIFAR10 and CIFAR100, we use standard implementation of ResNet18 from \url{https://pytorch.org/hub/pytorch_vision_resnet/}.
  As for SVHN and MNIST, the residual network consists of 3 residual blocks.
  The residual block shares a similar architecture in Fig.~\ref{fig:dy-net} except with the identity shortcut mapping and without BN.
  We use 3$\times$3 kernels for all convolution filters.
  The number of feature maps in the convolution filters is set to 12 and 16 respectively for
  MNIST and SVHN.
  Meanwhile, the inception network consists of a convolution layer followed by an inception block (see Fig.~\ref{fig:inception}),
  another convolution layer, and two fully-connected layers.
  Regarding the hyper-parameters used in baselines, we
  select them from an appropriate search space detailed in Table~\ref{table:hyper}.
  We use the implementation in \url{https://github.com/Thrandis/EKFAC-pytorch} for EKFAC
  and implement our own EMSA in PyTorch since the official code released from \citet{li2017maximum} does not support GPU parallelization.

\endgroup

\textbf{Ablation study (Fig.~\ref{fig:abl-als})}
Each grid in Fig.~\ref{fig:abl-als} corresponds to a distinct combination of baseline and dataset. Its numerical value reports the performance difference between the following two training processes.
\begin{itemize}[noitemsep,topsep=0pt]
    \vskip -0.5in
    \item Accuracy of the baseline run with the best-tuned configuration which we report in Table~\ref{table:clf} and \ref{table:clf2}.
    \item Accuracy of DGNOpt with its parameter curvature set to the precondition matrix implied by the above best-tuned setup.

\end{itemize}
For instance, suppose the learning rate of EKFAC on MNIST is best-tuned to 0.01, then we simply set
$\Quutn \approx 0.01 \times \Qutn\Qutn^\transpose$ for all $t$.
From Theorem~\ref{thm:alg-connection}, these two training procedures only differ in the presence of $Q_{\theta\vx}^{t,n}$,
which allows EKFAC to adjust its update based on the change of $\vx_t\in\etaCL$.

\textbf{Runtime and memory complexity (Fig.~\ref{fig:runtime}).}
The numerical values are measured on the GTX 2070.

\textbf{Feedback analysis (Fig.~\ref{fig:large-lr}).}
We use the same inception-based network in Table~\ref{table:clf2}.

\textbf{Remark for EMSA (Footnote 3).}
Extended Method of Successive Approximations (EMSA) was originally proposed by \citet{li2017maximum} as an OCP-inspired method
for training \emph{feedforward} networks.
It considers the following minimization,
\begin{equation}
\begin{split}
    \theta_t^* =& \argmin {H}^\rho_t\left(\vz_t, \vz_{t+1}, \vp_{t}, \vp_{t+1}, \theta_t \right), \\
    \text{where }
    {H}^\rho_t\left(\vz_t, \vz_{t+1}, \vp_{t}, \vp_{t+1}, \theta_t \right)
  \coloneqq&
  H_t\left(\vz_t, \vp_{t+1}, \theta_t \right) +
  \frac{1}{2}\rho \norm{\vz_{t+1} - f_t(\vz_t,\theta_t)}_2 +
  \frac{1}{2}\rho \norm{\vp_t - \nabla_{\vz_t} H_t}_2
\end{split}  \label{eq:emsa}
\end{equation}
essentially augments the original Hamiltonian $H_t$ with the feasibility constraints on both forward states and backward co-states.
EMSA solves the minimization (\ref{eq:emsa})
with L-BFGS per layer at each training iteration.
In Table~\ref{table:clf} and \ref{table:clf2}, we extend their formula to accept $H_{t,n}$.
Due to the feasibility constraints,
the resulting modified Hamiltonian ${H}^\rho_{t,n}$ depends additionally on $\vx_{t+1}$ and $\vp_{t,n}$;
hence being different from the original Hamiltonian $H_{t,n}$.
As a result, the ablation analysis using Theorem~\ref{thm:alg-connection} is not applicable for EMSA.

\textbf{Cooperative training (Fig.~\ref{fig:multi-player}, Fig.~\ref{fig:players-comp}, and Table~\ref{table:multi-player}).}
The network consists of 4 convolutions followed by 2 fully-connected layers, and is activated by ReLU.
We use 3$\times$3 kernels with 32 feature maps for all convolutions and set the batch size to 128.

\textbf{Adaptive alignment with bandit (Fig.~\ref{fig:bandit} and Table~\ref{table:bandit}).}
We use the same ResNet18 as in classification for CIFAR10,
and a smaller residual network with 1 residual block for SVHN.
The residual block shares the same architecture as in Fig.~\ref{fig:dy-net} except without BN.
All convolution layers use 3$\times$3 kernels with 12 feature maps.
Again, the batch size is set to 128.
Note that in this experiment we use a slightly larger learning rate compared with the one used in Table 2. While DGNOpt achieves better final accuracies for both setups, in practice, the former tends to amplify the stabilization when we enlarge the information structure during training. Hence, it differentiates DGNOpt from other baselines.

Alg.~\ref{alg:dgnopt-adaptive} presents the pseudo-code of how DGNOpt can be integrated with any generic bandit-based algorithm (\markblue{marked as blue}).
For completeness, we also provide the pseudo-code of EXP3++ in Alg.~\ref{alg:exp3pp}.
We refer readers to \citet{seldin2014one} for the definition of $\xi_k(m)$ and $\eta_k$ (do not confuse with $\eta_{t,n}$ in the main context).

\begin{minipage}[t]{0.54\textwidth}
  \vskip -0.25in
  \begin{algorithm}[H]
     \caption{DGNOpt \markblue{with Multi-Armed Bandit (MAB)}} %
     \label{alg:dgnopt-adaptive}
  \begin{algorithmic}
     \STATE {\bfseries Input:} dataset $\mathcal{D}$, network $\{ f_i(\cdot, \theta_i) \}$, \markblue{number of alignments $M$}
     \STATE \markblue{Initialize the multi-armed bandit \texttt{MAB.init($M$)}}
     \REPEAT
     \STATE \markblue{Draw an alignment $m \leftarrow$ \texttt{MAB.sample()}.}
     \STATE \markblue{Construct $F\equiv \{F_t: t\in[T]\}$ according to $m$.}
     \STATE Compute $\vx_t$ by propagating $\vx_0 \sim \mathcal{D}$ through $F$
     \FOR{$t=T{-}1$ {\bfseries to} $0$ }{\hfill \markgreen{$\rhd$ Solve FNE or GR}}
         \STATE Solve the update $\delta \pi^*_{t,n}$ with (\ref{eq:fne-update}) or (\ref{eq:gr-update})
       \STATE Solve {${(V_\vx^{t,n}{,}V_{\vx\vx}^{t,n})}$} or {${(W_\vx^t{,}W_{\vx\vx}^t)}$} with (\ref{eq:fne-vxx}) or (\ref{eq:gr-vxx})
     \ENDFOR
     \STATE Set ${\vx}^\prime_0=\vx_0$
     \FOR{$t=0$ {\bfseries to} $T{-}1$}{\hfill \markgreen{$\rhd$ Update parameter}}
       \STATE Apply $\theta_{t,n} {\leftarrow} \theta_{t,n} {-} \delta \pi^*_{t,n}(\dvx_{t})$ with $\dvx_{t} {=} {\vx}^\prime_{t} {-} \vx_{t}$
       \STATE Compute ${\vx}^\prime_{t+1} = F_t({\vx}^\prime_t, \theta_{t,1}, \cdots, \theta_{t,N})$ %
     \ENDFOR
     \STATE \markblue{Compute the accuracy $r$ on validation set.}
     \STATE \markblue{Run \texttt{MAB.update($r$)}.}
     \UNTIL{ converges }
  \end{algorithmic}
  \end{algorithm}
\end{minipage}
\hfill
\begin{minipage}[t]{0.45\textwidth}
  \vskip -0.25in
  \begin{algorithm}[H]
     \caption{EXP3++ \citep{seldin2014one} } %
     \label{alg:exp3pp}
  \begin{algorithmic}
     \FUNCTION{\texttt{init($M$)}}
      \STATE $(k,M)\leftarrow(1,M)$
      \STATE $\forall m, L_k(m)=0$
     \ENDFUNCTION
     \STATE
     \FUNCTION{\texttt{sample()}}
      \STATE $\forall m, \epsilon_k(m) = \min\{ \frac{1}{2M},\frac{1}{2} \sqrt{\frac{\ln M}{kM}}, \xi_k(m) \} $
      \STATE $\forall m, \rho_k(m) = e^{-\eta_k {L}_k(m)}/\sum_{m^\prime} e^{-\eta_k {L}_k(m^\prime)}$
      \STATE $\forall m, \tilde{\rho}_k(m) = (1-\sum_{m^\prime} \epsilon_k(m^\prime)) \rho_k(m) + \epsilon_k(m)$
      \STATE Sample action according to $\tilde{\rho}_k(m)$
     \ENDFUNCTION
     \STATE
     \FUNCTION{\texttt{update($r_k^m$)}}
      \STATE $\ell_k^m = (1-{r_k^m})/{\tilde{\rho}_k(m)}$
      \STATE ${L}_{k+1}(m) = {L}_k(m) + \ell_k^m$
      \STATE $k \leftarrow k + 1$
     \ENDFUNCTION
  \end{algorithmic}
  \end{algorithm}
\end{minipage}

\textbf{Additional Experiments.}
\begin{minipage}[t]{0.6\textwidth}
  \begin{figure}[H]
    \vskip -0.15in
    \centering
    \includegraphics[height=2.65cm]{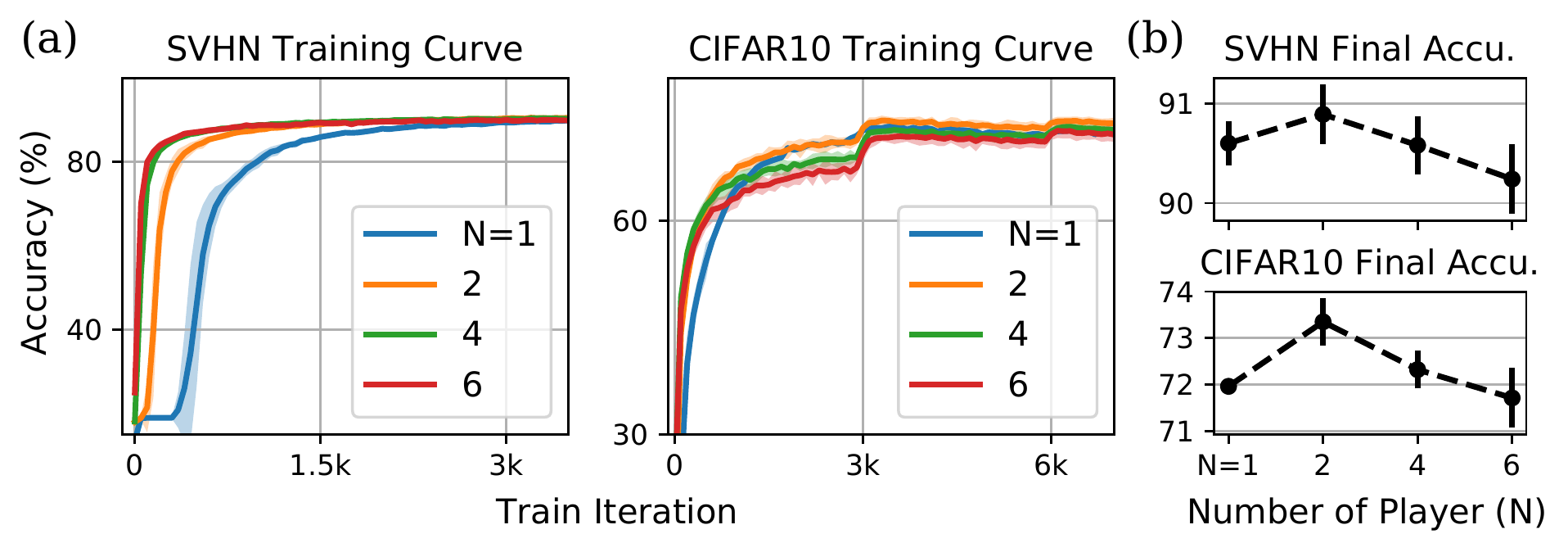}
    \vskip -0.15in
    \caption{
        (a) Training curve and (b) final accuracy as we vary the number of player ($N$)
        as a hyper-parameter of game-extended EKFAC. Similar to Fig.~\ref{fig:multi-player}, we also observe that $N{=}2$ gives the best final accuracy on both datasets.
    }
    \label{fig:11}
    \vskip -0.1in
  \end{figure}
\end{minipage}

\end{document}